\documentclass[conference]{IEEEtran}
\IEEEoverridecommandlockouts

\usepackage{cite}
\usepackage{graphicx}
\usepackage{booktabs}
\usepackage{comment}
\usepackage{microtype}

\usepackage{amsmath}
\usepackage{amssymb}
\usepackage{amsfonts}
\usepackage{amsthm}
\usepackage{mathtools}
\usepackage{bbm}
\usepackage{multicol}

\usepackage{algorithm}
\usepackage{algorithmic}

\usepackage[hidelinks]{hyperref}
\usepackage[capitalize,noabbrev]{cleveref}

\newtheorem{lemma}{Lemma}

\DeclarePairedDelimiter{\norm}{\lVert}{\rVert}
\DeclarePairedDelimiter{\adjcurl}{\lbrace}{\rbrace}
\DeclarePairedDelimiter{\adjpar}{\lparen}{\rparen}
\DeclarePairedDelimiter{\abs}{\lvert}{\rvert}

\DeclareMathOperator*{\argmin}{arg\,min}
\DeclareMathOperator*{\argmax}{arg\,max}
\DeclareMathOperator{\conv}{conv}
\DeclareMathOperator{\eig}{eig}

\begin{document}

\title{Belief-Space Control for Personalized Cancer Treatment via Active Inference%
\thanks{The authors would like to acknowledge the American Association for Cancer Research and its material support in the development of the AACR Project GENIE registry, as well as members of the consortium for their commitment to data sharing. Interpretations are the responsibility of study authors.}
}

\author{\IEEEauthorblockN{Deniz Sargun \thanks{This work does not relate to the first two authors' positions at Amazon and HP.}}
\IEEEauthorblockA{
\textit{Amazon.com Inc.} \\
Palo Alto, CA \\
denizsargun@gmail.com}
\and
\IEEEauthorblockN{H. Bugra Tulay}
\IEEEauthorblockA{
\textit{HP Inc.}\\
Denver, CO \\
hbugratulay@gmail.com}
\and
\IEEEauthorblockN{C. Emre Koksal}
\IEEEauthorblockA{
\textit{The Ohio State University}\\
Columbus, OH \\
koksal.2@osu.edu}
}

\maketitle
\raggedbottom
\begin{abstract}
Cancer treatment is at the core a sequential decision-making problem with partial observability, latent patient heterogeneity, and explicit constraints on the budget for medical measurements. Unlike standard Reinforcement Learning (RL) approaches that control state trajectories, cancer treatments permanently modify patients' transition dynamics, changing how states evolve over time. We model cancer treatment as a belief-space planning problem using active inference, deriving an expected free-energy objective that unifies goal-directed control and information acquisition under measurement budgets without. We implement this framework using real clinical cancer data from the AACR Project GENIE Biopharma Collaborative dataset. Results on clinical data demonstrate a simultaneous patient categorization and high treatment efficacy, under real measurement and treatment constraints.

\end{abstract}

\begin{IEEEkeywords}
active inference, inference as control, free-energy principle, Markov decision process, variational inference, cancer treatment
\end{IEEEkeywords}

\section{Introduction}

Cancer treatment is a sequential decision-making problem under partial observability, uncertainty and explicit resource constraints. At each stage of therapy, clinicians select treatment actions (e.g., dosing, radiation) and, optionally, diagnostic measurements (e.g., imaging or laboratory tests) in order to control disease progression. The underlying patient state, including subjective circumstances such as the quality of life and tumor burden are only partially observed, and high-fidelity measurements are costly, invasive, and available at limited rates. Consequently, treatment decisions must be made under sparse, decision-dependent observations and a strict measurement budget. In addition, the response to treatment depends on latent patient-specific attributes (e.g. genetic, immunological, and physiological factors) that are not known a priori and must be inferred from heterogeneous clinical data. Therefore, effective decision-making requires a policy that simultaneously \emph{controls} disease progression and \emph{learns} patient-specific dynamics from limited feedback. This induces an exploration-exploitation tradeoff that is tightly coupled to measurement decisions: diagnostic actions improve state inference but consume scarce resources, while therapeutic actions affect both the state trajectory and the information available for future decisions.

From a modeling standpoint, cancer treatment differs from classical control problems. In standard optimal control or reinforcement learning formulations, actions influence the instantaneous state while the system dynamics remain mostly fixed. In contrast, oncologic interventions often induce \textbf{plastic dynamics}: treatments permanently alter disease mechanisms and shift long-term equilibria~\cite{nagy2005ecology}. As a result, the objective is not merely to control a state trajectory, but to shape the evolution of the \emph{distribution of the system state} over time. Hence, through the actions taken, the objective is to alter the system toward a desired preferred point, while simultaneously keeping the states at a desired distribution along the path, all subject to a measurement budget.

In this paper,\footnote{An extended version of this paper with additional proofs, derivations, and supplementary material is available on arXiv under the same title.} we first formalize general oncological treatment setting as a constrained partially observable Markov decision process (POMDP) in which actions drive the evolution of the belief state over latent patient variables. 
Treatment and measurement actions induce a controlled evolution of this belief through the predictive posterior, while performance is evaluated by how closely the resulting belief distribution aligns with a clinically preferred distribution. A constraint on cumulative or instantaneous measurements enforces a budget on information acquisition. Solving this POMDP exactly requires belief-state planning with unknown, high-dimensional transition and observation models under decision-dependent missingness, which is intractable with limited clinical data.

To address the complexity, we adopt the \textbf{free energy principle}~\cite{friston2019free,friston2023free} and its operational realization via \textbf{active inference}~\cite{ramstead2023bayesian}. Active inference recasts belief updating as variational inference and action selection as the minimization of expected free energy. To that end, it creates an information-theoretic functional that decomposes into terms corresponding to \textit{goal alignment (risk), observation uncertainty (ambiguity), and information gain (epistemic value)}. This formulation provides a tractable approximation to \textbf{belief-space planning} that naturally balances exploitation and exploration, without requiring explicit reward engineering or heuristic exploration strategies. It operates directly on belief distributions, making it well suited to distributional control under partial observability and measurement constraints.



We ground our modeling and evaluation using data from the AACR (American Association for Cancer Research) Project GENIE (Genomics Evidence Neoplasia Information Exchange) Biopharma Collaborative \cite{GENIE2017}, a large-scale clinicogenomic dataset that links genomic profiles with treatment histories, outcomes, and clinical annotations across diverse cancer types. GENIE captures key challenges inherent to oncology decision-making, including substantial patient heterogeneity, sparse and irregular measurements, and decision-dependent missingness, making it well suited for studying personalized treatment under limited feedback.

Rather than treating clinical and genomic variables as static covariates, our framework integrates GENIE data sequentially. Genomic and demographic features inform latent patient categories and prior beliefs, while observed treatments and clinical events update posterior beliefs over disease states via variational inference. Since high-fidelity measurements such as molecular profiling or advanced imaging are available only intermittently, our belief-based formulation naturally accommodates measurement constraints and uncertainty. By operating at the level of evolving state distributions rather than point-wise dynamics, the proposed approach aligns with the structure of oncology data and demonstrates \textbf{considerable increase in life expectancy} in clinical data driven simulations.

The contributions of this paper are as follows: 

\noindent $\bullet$ We formulate cancer treatment as a constrained distributional POMDP with plastic dynamics, emphasizing belief evolution rather than point-wise state control. 

\noindent $\bullet$ We develop an active inference framework that approximates belief-space planning via variational inference and expected free-energy minimization under measurement budgets. 

\noindent $\bullet$ We develop a data-driven instantiation of the expected free-energy framework that is explicitly aligned with the structure and limitations of real-world clinicogenomic data, enabling stable inference and decision-making under sparse, heterogeneous, and decision-dependent observations. 

\noindent $\bullet$ We demonstrate the practical efficacy of the proposed framework using real clinical data, showing explicit extension of life expectancy with our active-inference based personalized treatment strategy.

\section{Relevant Work}

Contextual bandits (CB) have become a dominant paradigm for personalized medicine due to their ability to map patient features (contexts) to treatments (actions) with strong regret guarantees \cite{li2010contextual}  \cite{pmlr-v32-agarwalb14}. However, this modeling choice is inherently myopic where rewards are immediate and tied to the current decision. Each treatment decision is evaluated based on immediate outcomes, with no formal mechanism to account for how current interventions reshape future state distributions or influence outcomes that manifest after substantial delays. Standard contextual bandit formulations abstract away action-dependent state dynamics, and therefore cannot represent how interventions reshape future state distributions. This is a structural mismatch for oncology and chronic disease management, where interventions change future state distributions. By ignoring the dynamics, CBs fail to account for the non-elastic nature of the costs or benefits of the current decisions.

RL in Markov decision processes (MDPs) explicitly models long-horizon consequences via controlled state transitions and discounted cumulative returns, making it a more appropriate abstraction for sequential treatment planning than contextual bandits. Deep RL has achieved major empirical successes in settings where extensive environment interaction is feasible, exemplified by value-based methods such as DQN. \cite{mnih2015human}. Recent retrospective deep-RL studies continue to report promising policies in critical care (e.g., sepsis), but they also sharpen concerns about offline learning and safety guaranties \cite{wu2023value}. In cancer treatment we need to incorporate additional constraints, such as (i) clinical rewards (e.g., tumor shrinkage) are sparse and delayed, (ii) observations are expensive and potentially invasive (labs, imaging, biopsies). Moreover, exploratory policies are not acceptable due to safety and ethical constraints. Previous work \cite{yu2021reinforcement} highlights these gaps and emphasizes partial observability, nonstationarity, and reliable off-policy evaluation \cite{jiang2016doubly} as central challenges for RL in healthcare.

When disease state is latent and only indirectly measured, the natural abstraction is a POMDP, where decisions are made over a belief state updated from noisy, intermittent observations. In many clinical settings, “when to observe” is itself a decision; observation-cost formulations make this explicit by attaching a cost to measurements and constraining adaptation between measurements \cite{reisinger2025markov}. Similarly, budget-constrained breast cancer screening policies and prostate cancer surveillance have been studied in \cite{helmeczi2023multi} and  \cite{LI2023386} respectively. However, maintaining accurate beliefs and planning in belief space is computationally demanding, motivating scalable approximate solvers such as point-based methods (PBVI \cite{pineau2003point}, SARSOP \cite{kurniawati2008sarsop}) and sampling-based online look-ahead with particle beliefs and scenario trees (POMCP \cite{silver2010monte}, DESPOT \cite{somani2013despot}). Some screening-policy work instead uses Markov models with Mixed-Integer Linear Programming (MILP) optimization to avoid direct MDP/POMDP solution at scale \cite{ccauglayan2025assessing}. Beyond POMDP solvers, \textbf{control-as-inference} and KL-regularized control formulations interpret decision-making as inference over trajectories and actions as structured modifications of transition dynamics, which closely aligns with transition-kernel ``tuning''~\cite{levine2018controlasinference,todorov2006linearly,kappen2012optimal}.

Despite substantial progress in scalable learning-based approximations in the POMDP space, optimizing for the belief space under limited observation budget remains underexplored. In oncology, observations are sparse, decision-dependent, and actions may affect the patient state and disease dynamics erratically due to latent categorical heterogeneity (e.g., patient subtypes). Indeed, clinical results on tumor progression shape mathematical models that highlight such heterogeneity that motivates latent categorical variables to capture unobserved variation in disease dynamics \cite{nagy2005ecology}. Moreover, evaluating the predictive posterior distribution of the disease evolution from Electronic Health Record (EHR) trajectories is prohibitively complex, even without irregular monitoring and limited measurement budgets. Thus, we move beyond exact Bayesian belief updates and optimal POMDP planning toward a control-as-inference framework via a systematic information-theoretic framework using free-energy principle. Using active inference, we jointly select measurements and interventions that balance progression toward desirable clinical outcomes and costly observations under latent categorical heterogeneity.
\section{Model and Problem}

We consider a partially observed controlled dynamical system indexed by \(C \in \mathcal{C}=\{1,\dots,d\}\). This variable will denote the latent \emph{patient category}, capturing coarse patient heterogeneity, representing directly observable attributes such as age or weight, as well as attributes that could be inferred via explicit tests and measurements, such as biological subtype or genetic markers. 

We assume the (possibly multivariate) latent \emph{patient state} \(X_k \in \mathcal{X}=\{1,\dots,n\}\) evolve according to a discrete Markov chain over treatment stages indexed by \(k = 0,1,\ldots\)

At each stage, the clinician may choose to acquire information through a binary (or discrete) \emph{measurement selector} \(M_k \in \{0,1\}\) (e.g., imaging vs.\ no imaging) and administer a \emph{treatment action} \(T_k\). Treatment actions modify the system's transition dynamics through: 
\begin{align} \label{equation: transition matrix update}
P_{k+1|C}& =(1-\alpha_{k|C})P_{k|C}+\alpha_{k|C}T_k
\end{align}
where \(\alpha_{k|C}\in[0,1]\) controls the tuning rate and the sequences \(\alpha=(\alpha_1,\dots,\alpha_d)\) are known category-specific parameters. This captures how oncologic interventions induce plastic dynamics: treatments permanently reshape disease progression rather than merely affecting the current state. We collect the decision variables into the composite action 
\begin{align}
A_k&\triangleq (M_{k+1},\,T_k) \in \mathcal{A}.
\end{align}
for the available set, ${\cal A}$ of actions. State observations are intermittent and controlled by an observation variable as $Y_k= M_k X_k$, $M_k$ being the indicator for the direct (or noisy) measurement of state $X_k$.

\begin{figure}[ht]
\vskip -0.2in
\begin{center}
\centerline{\includegraphics[width=0.38\linewidth]{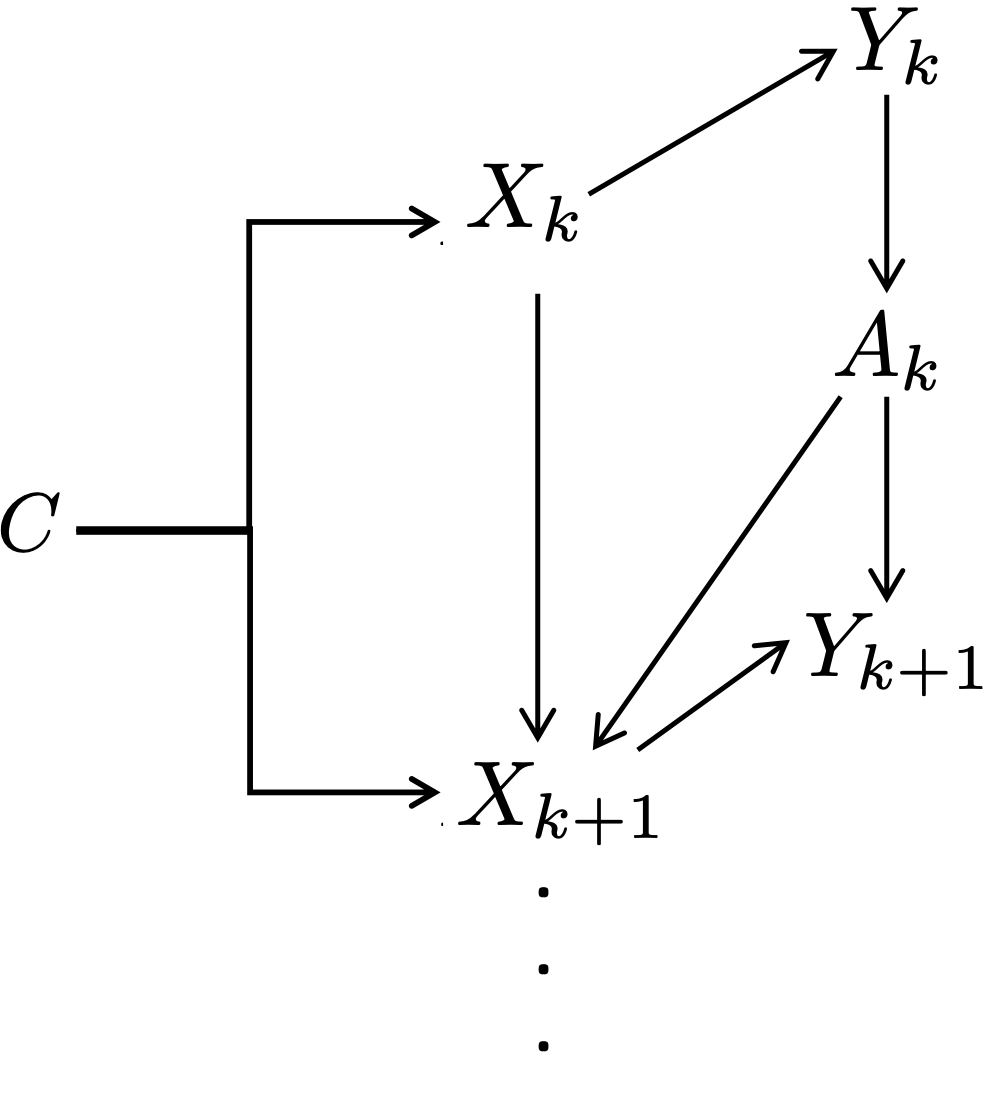}}
\caption{Patient dynamics with limited measurements. Patient category \(C\) and state \(X_k\) evolve under actions $A_k$; observations \(Y_k\) are obtained via a measurement decision (e.g., imaging).}
\vspace{-0.2in}
\label{figure: model}
\end{center}
\end{figure}
The system is initialized with category-specific state distributions \(p(X_0|C)\) and baseline transition matrices \(P_{0|C}\) extracted from clinical data. The evolution of the system is illustrated in Fig. \ref{figure: model}. The associated generative factorization can be summarized as: \vspace{-0.1in}
\begin{equation}
p_C(x_{k+1}, y_k, a_k | x_k)
= p_C(x_{k+1} | x_k, a_k)p(a_k | y_k)p(y_k | x_k)
\nonumber
\end{equation}
where \(p_C(x_{k+1} | x_k, a_k)\) captures category-dependent disease/treatment dynamics, \(p(y_k | x_k)\) is the EHR observation channel, and \(p(a_k | y_k)\) represents an implicit decision rule.

\noindent \textbf{Problem formulation:} A distinctive aspect of our setting is that the objective is naturally expressed at the \emph{distributional} level: actions do not merely move a single realized trajectory, but shape the evolution of the \emph{belief} (or state distribution) over the patient under limited feedback. Accordingly, we start by formulating the control problem in belief space. Let 
\( \pi_{k,X,C}(x,c)\triangleq p(x,c \mid y_{0:k},a_{0:k-1}) \)
denote the posterior belief over the latent patient state and category after observing history up to stage \(k\). The belief evolves via the Bayes filter: 
\begin{equation}
\pi_{k+1,X,C}(x',c)\propto p(y_{k+1}| x')\sum_{x\in\mathcal{X}}p_c(x'| x,a_k)\pi_{k,X,C}(x,c).
\label{equation: bayes_filter}
\end{equation}
We seek actions that drive \(\pi_{X}\) toward a clinically desired distribution \(\pi_{X}^*\) (equivalently, toward preferred state marginals), rather than optimizing a pointwise state cost. Over an $n$-stage treatment, our objective can be stated as a constrained distributional POMDP: 
\begin{align}
\min_{p_A} &\quad \mathbb{E}_{p_A}\left[\sum_{k=0}^{n} KL\big(\pi_{X,k} \,\|\, \pi_{X}^*\big)\right] \label{equation: pomdp_dist_obj}\\
\text{subject to}&\quad  \mathbb{E}_{p_A}\!\left[\sum_{k=0}^{n} M_k\right]\leq B,
\label{equation: pomdp_dist_constraint}
\end{align}
where \(KL\) is the Kullback-Leibler divergence, measuring mismatch between the current belief and the preferred belief (we will also use Jensen-Shannon divergence later on), and \(B\) is a budget on cumulative measurements. This formulation captures the measurement actions' influence on the \emph{evolution of the belief distribution} through \eqref{equation: bayes_filter}, and that control seeks to shape the entire distribution (personalized uncertainty included), not only the realized state.

Despite its clinical relevance, solving \eqref{equation: pomdp_dist_obj}--\eqref{equation: pomdp_dist_constraint} is difficult because it requires accurate belief-state dynamics, which in turn depend on the predictive posterior induced by the unknown transition kernel \(p_c(x_{k+1}| x_k,a_k)\) and the observation model \(p(y_k| x_k)\), under decision-dependent missingness, driven by \(M_k\). In the following, we introduce a free-energy formulation that bypasses explicit belief-space planning by replacing exact Bayesian filtering with variational inference and deriving an expected free-energy objective whose risk term naturally corresponds to driving \(\pi_{k,X}\) toward \(\pi_{X}^*\), while its epistemic component quantifies the value of additional measurements under the budget constraint.

\section{Approach} \label{section: approach}

Directly learning or exploiting the predictive posterior
\(
p_C(x_{k+1} | x_k, y_k, a_k)
\)
is challenging in clinical data due to sparse measurements, 
and observations that depend on past decisions, and latent heterogeneity through \(C\).
The \textbf{free-energy principle}~\cite{friston2019free,friston2023free,ramstead2023bayesian} provides a unified alternative in which inference and
control are treated jointly.
Rather than computing exact Bayesian updates and optimal policies in a POMDP,
we maintain an approximate belief over latent variables and select actions that
are expected to minimize a principled information-theoretic objective.

Furthermore, our formulation will enable an explicit tradeoff between (i) driving the system
toward clinically preferred states (e.g., tumor reduction) and (ii) acquiring
information efficiently under measurement constraints, under limited budgets.

\subsection{Variational Free Energy} \label{section: vfe}

Consider a candidate action \(a \in \mathcal{A}\).
The joint generative model for the next step is 
\begin{multline}
p(x_{k+1}, y_{k+1},c \mid x_k, y_k, a)
\\
= p_C(x_{k+1} | x_k, a)\,
  p(y_{k+1} | x_{k+1})\,
  p(c). 
\label{equation: joint_next}
\end{multline}
Exact inference of \(p(x_{k+1},c \mid y_{k+1}, x_k, y_k, a)\) is generally intractable, so we introduce a variational density \(\pi_{k+1,X,C}(x_{k+1},c \mid y_{k+1}, x_k, y_k, a)\).
The corresponding \emph{variational free energy}~\cite{friston2019free} evaluated at time $k+1$ is 
\begin{multline}
\mathcal{F}_{k+1}(y_{k+1}, a)\\
\triangleq\mathbb{E}_{\pi_{k+1,X,C}}
\Big[\log \pi_{k+1,X,C}(x_{k+1},c \mid y_{k+1}, x_k, y_k, a)\\
-\log p(x_{k+1}, y_{k+1},c \mid x_k, y_k, a)\Big].
\label{equation: vfe_def}
\end{multline} 
Using the standard decomposition, 
\begin{align}
\nonumber
&\mathcal{F}_{k+1}(y_{k+1}, a)
=KL\left(\pi_{k+1,X,C}(x_{k+1},c  \mid y_{k+1}, x_k, y_k, a) \right. \\
& \|  
\left. p(x_{k+1},c \mid y_{k+1}, x_k, y_k, a)
\right) 
- \log p(y_{k+1} | x_k, y_k, a),
\label{equation: vfe_bound}
\end{align}
minimizing \(\mathcal{F}_{k+1}\) with respect to \(\pi_{k+1,X,C}\) yields to approximate Bayesian inference and an upper bound on surprise~\cite{friston2017computational}.

\subsection{Expected Free Energy and Action Selection} \label{section: efe}

Action selection occurs before observing \(Y_{k+1}\).
Given a belief \(\pi_{k,X,C}(x_k, c \mid y_{0:k}, a_{0:k-1})\),
a candidate action \(a\) induces the predictive distribution \vspace{-0.1in}
\begin{multline}
\pi_{k+1,X,Y,C}(x_{k+1},y_{k+1},c)
\\
\triangleq
\Big(
\mathbb{E}_{\pi_{k,X,C}(x_k, C)}\left[
p_C(x_{k+1} | x_k, a) \right]
\Big)\,
p(y_{k+1} | x_{k+1}).
\label{equation: predictive_q}
\end{multline}
The \emph{expected free energy} is defined as the expected future variational free energy: \vspace{-0.1in}
\begin{multline}
\mathcal{G}_k(a)
\triangleq
\mathbb{E}_{\pi_{k+1,y}(y_{k+1})}
\big[
\mathcal{F}_{k+1}(y_{k+1}, a)
\big]
\\
=
\mathbb{E}_{\pi_{}(y_{k+1}, z_{k+1})}
\Big[
\log \pi_{k,X,C}(z_{k+1} | y_{k+1}, a)
\\ -
\log p(z_{k+1}, y_{k+1} | a)
\Big].
\label{equation: efe_def}
\end{multline}
To encode clinical objectives, we introduce a preference distribution
\(\pi_{X}^*(x_{k+1})\) over desired next-step states.
Via the factorization\footnote{As a part of the agent’s generative model, \(\pi_X^*(x_{k+1})\) encodes desired outcomes, representing a preference prior rather than the true disease dynamics. Actions are chosen to shape the predictive belief toward this distribution.} \(p(z_{k+1}, y_{k+1} | a)= p(y_{k+1} | x_{k+1}, a)\pi_X^*(x_{k+1})p(c),\)
the expected free energy decomposes as 
\begin{multline}
\mathcal{G}_k(a)
=KL\big(
\pi_{k+1,X}(x_{k+1})\|\pi_X^*(x_{k+1})
\big)+\\
\mathbb{E}_{\pi_{k+1,X}(x_{k+1})}\left[H\left(p(y_{k+1} | x_{k+1},a)\right)\right]-I_\pi(X_{k+1}; Y_{k+1} | a).
\label{equation: efe_decomp}
\end{multline}
The decomposition in \eqref{equation: efe_decomp} clarifies how action selection balances clinical objectives with information acquisition. 
The \textbf{risk term}, \(KL\big(\pi_{k+1,X}(x_{k+1})\|\pi_X^*(x_{k+1})\big),\) penalizes predicted state distributions that deviate from clinically preferred outcomes, promoting exploitative actions that directly reduce tumor burden or adverse events. 
The \textbf{ambiguity term}, \(\mathbb{E}_{\pi_{k+1,X}(x_{k+1})}[H(p(y_{k+1}|x_{k+1},a))],\)
discourages actions expected to produce intrinsically noisy or uninformative observations, reflecting measurement cost and clinical burden. 
The \textbf{epistemic term}, \(I_\pi(X_{k+1};Y_{k+1}|a),\) quantifies expected information gain, favoring exploratory actions such as imaging when uncertainty about patient state or category is high.

At each point $k$ in time, the optimal decision rule is therefore 
\begin{align}
a_k^\star&\in \arg\min_{a \in \mathcal{A}} \mathcal{G}_k(a),
\end{align}
which balances goal-directed behavior (risk term) with information acquisition (epistemic value) under limited measurements. Measurement selector \(M_{k+1}\) directly modulates the information term \(I_\pi(X_{k+1}; Y_{k+1} | a)\), making the measurement budget an intrinsic component of the control objective.

\subsection{Simplified Cost Function} \label{subsection: simplified_obj}

Here, we modify the cost function provided by expected free energy in order to align it with the medical data we will use. The alternate cost preserves the core structure of expected free-energy minimization while allowing easier integration of EHR data. In particular, we consider 
\begin{align} \label{equation: simplified_cost}
J_k&\triangleq M_k+\tau L(\pi_{k,X}, X_k)+\kappa\mathrm{JSD}\left(\pi_{k,X} \|\pi_X^*\right),
\end{align}
where \(\pi_{k,X}\) denotes the state belief distribution at stage \(k\), \(M_k\) is the measurement action (e.g., imaging), \(L(\cdot,\cdot)\) is a task-specific distance capturing ambiguity with respect to the current belief over the patient state \(X_k\), and \(\pi_X^*\) is a preferred action distribution encoding clinically desired behavior. 
We employ the Jensen--Shannon divergence (JSD) as a symmetric and bounded measure of discrepancy between distributions. For two distributions \(p\) and \(q\), it is defined as 
\begin{align} \label{equation: jsd_def}
\mathrm{JSD}(p \,\|\, q)&\triangleq
\frac{1}{2}KL\!\left(p \,\middle\|\, \tfrac{1}{2}(p+q)\right)
+
\frac{1}{2}KL\left(q \,\middle\| \tfrac{1}{2}(p+q)\right).
\end{align}
Unlike KL divergence, JSD is always finite and well defined, even when the supports of the two distributions do not fully overlap, making it well suited for robust optimization with limited medical datasets. We use $\tau$ and $\kappa$ to calibrate for the desired balance between different terms of the cost function.

This objective mirrors the canonical decomposition of the expected free energy in \eqref{equation: efe_decomp}. The measurement term \(M_k\) represents an explicit \emph{epistemic cost}, reflecting the resource burden associated with information acquisition. The term \(L(\pi_{k,X},X_k)\) serves as a surrogate for the \emph{ambiguity} component of expected free energy, penalizing actions that are poorly aligned with the current belief over patient state or that are expected to produce unreliable or low-utility observations. The final term,
\(\mathrm{JSD}(\pi_{k,X}\|\pi_X^*)\), acts as a \emph{risk} term, incentivizing the action distribution to remain close to clinically preferred treatments, thereby promoting exploitation.

Given the objective in Eq. (\ref{equation: simplified_cost}), the problem reduces to: 
\begin{equation} \label{equation: minimize total cost}
    \min_{A_k, k\geq 1}\mathbb{E} \left[\sum_{k=0}^{\infty} \gamma^k J_k \right]
\end{equation}
where \(\gamma \in (0,1)\) is the discount factor.
Overall, \eqref{equation: minimize total cost} can be viewed as a structured surrogate for expected free-energy minimization: it retains the essential tradeoff between epistemic exploration, ambiguity reduction, and risk-sensitive exploitation, while replacing implicit information-theoretic quantities with explicit, data-aligned penalties. This modification is introduced to enable easier integration of
EHR data, as illustrated in Sections~\ref{sec:data_integration} and \ref{sec:results}.
\section{Solution and Basic Performance Limits}
\label{section: solution}
In this section, we propose and analyze two naive solutions and then compare the two methods with a genie algorithm. We denote the policy on the decision of monitoring as \(\mu\) and the policy on the tuning of the state transition matrix as \(\theta\).

\subsection{Solution Agents} \label{subsection: solution agents}
\noindent \textbf{Constant tuning agent:} For the naive solution we use a constant matrix for tuning and observe only when entropy of belief exceeds a threshold. 
\begin{align}
    \mu_k&= \mathbbm{1}(H(\pi_{k,X})\geq h), \theta_k= T.
\end{align}
Proper choice of the value of \(T\) is addressed in Section \ref{subsection: finding stable transition matrices}.

\noindent \textbf{Convex tuning agent:}
 Unlike the constant tuning agent, the convex tuning agent uses different constant matrices to tune each category and a convex combination for the current belief. 
\begin{align}
    \mu_k&= \mathbbm{1}(H(\pi_{X,k})\geq h), \theta_k=\sum_c\pi_{k,C}(c)T_c.
\end{align}
Proper choice of the value of \(T_c\) is addressed in Section \ref{subsection: finding stable transition matrices}.

\noindent \textbf{Constant discrete tuning agent:}
With tuning decisions restricted to a finite set \(\mathcal{T}\) state and decision space for the agent changes: \hspace{-0.15in}
\begin{equation}
    S= \{\pi,(P_k)_{c=1}^d,k,(\alpha_k^\infty)_{c=1}^d\},\ 
    A= \{M=0,1\}\times\mathcal{T}
\end{equation}
For each state, we have a value function \(V:S\to\mathbb{R}\) but even under time homogeneity, i.e., \(\alpha\) the characteristics are time invariant, the state space is large and \(S=\{\pi,(P_k)_{c=1}^d\}\).

\noindent \textbf{Discrete tuning genie:}
On the other hand, if the category is known, such as the case of a genie algorithm, then we can reduce the state space to a tractable dimension \(S=\pi,P_{k,C=c}\).

\subsection{Finding Stable Transition Matrices} \label{subsection: finding stable transition matrices}
Given the desired state \(\pi^*\) we have multiple heuristics to determine transition matrices \(P^*\) with desirable properties. We have the following constraints: 
\begin{equation} \label{equation: stable}
    (P^*-I)\pi^*\sim 0,\ \ \ P^*e_n=e_n
\end{equation}
where the second equation is only if state \(n\) is the unique absorbing state. Since \(P^*\in[0,1]^{n\times n}\), we have an underdetermined linear system of \(n^2\) variables and \(2n\) equations.

\noindent \textbf{Metropolis-Hastings construction:}
Choose any symmetric proposal matrix \(P\). Then, 
\begin{align}
    \hspace{-0.1in} 
    P^*_{MH}(i,j)&= \begin{cases}
     \min(1, \pi^*(j)/\pi^*(i))P(i,j), & i\neq j \\
     1 - \sum_{j\neq i} P^*_{MH}(i,j), & i=j
\end{cases}
\end{align}
Finally, update transition out of absorbing state as \(0\) if it exists.

\noindent \textbf{Maximum second largest absolute-value solution:}
If there are no absorbing states, choose a transition matrix that is ergodic and has maximum second largest absolute value eigenvalue \(\abs{\lambda_2(P)}\):
\begin{equation}
    P^*_{SL}= \argmax_{\pi^* P=\pi^*, P\ ergodic}\abs{\lambda_2(P)}
\end{equation}

\noindent \textbf{Minimum distance solution:}
Choose an ergodic transition matrix that satisfies the steady-state condition and is closest to the mixture of transition matrices:
\begin{equation}
    P^*_{MD}= \argmin_{\pi^* P=\pi^*, P\ ergodic}\norm{P-\sum_c p_C(c)P_{0|c}}
\end{equation}

\subsection{Belief Update}
Given belief \(\pi_k\) we update it as follows. If \(M_{k+1}=1\) then \(Y_{k+1}=X_{k+1}\) and
\begin{align}
    \pi_{k+1,X,C}(x,c)&= \tfrac{\mathbbm{1}_{X_{k+1}}(x)\sum_{x_k}\pi_{k,X,c}(x_k,c)e_xP_{k+1|c}e_{x_k}^T}{\sum_{x_k,c'}\pi_{k,X,c}(x_k,c')e_xP_{k+1|c'}e_{x_k}^T}.
\end{align}
where \(e_1,\dots,e_n\) denote the standard basis for \(\mathbb{R}^{1\times n}\). Otherwise, \(M_{k+1}=0, Y_{k+1}=\emptyset\) and
\begin{align}
    \pi_{k+1,X,C}(x,c)&= \sum_{x_k} \pi_{k,X,C}(x_k,c) \cdot e_x P_{k+1|c} e_{x_k}^T
\end{align}
Optimal sequence of policies \((\mu^*,\theta^*)\) is composed of optimal instantaneous policies \((\mu_k^*, \theta_k^*)\) indexed in time \(k\) that are functions of the belief over states at time \(k\) (\(\pi_k\)), the belief over categories \(b\) and \(\alpha\).

\subsection{Life Expectancy}
In this section, we derive the remaining life expectancy of a patient at a given state. Let $v$ denote the $n$-dimensional vector with entry $v_i$ representing the expected remaining life of a patient in state $i$. The time is measured in number of transitions, so it has the same unit as the time duration of a slot.

We characterize the problem as one of Markov chains with rewards. Since one of the states is terminal (deceased), the steady-state probability of that state is $1$. As a result, $\pi = [0\ 0 \ldots 0\ 1]^{\mathsf T}$, where state $n$ represents the terminal state. We assign a reward of $1$ unit for visiting each other state, and as a result the associated reward vector is $r=[0\ 0 \ldots 0\ 1]^{\mathsf T}$. Then, we can write the following equation for $v$:
\begin{equation}
    \label{eq:life_expectancy}
    v=r+P v.
\end{equation}
Here, the mechanism is clear: life expectancy is composed of the immediate reward plus the remaining life expectancy after making the transition. The following lemma shows that there is a unique solution for Eq.~(\ref{eq:life_expectancy}):
\begin{lemma} \label{lemma: life expectancy}
The equation $v=r+P v$ has a unique solution.
\end{lemma}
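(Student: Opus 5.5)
The plan is to show that $I-P$ is invertible. Then $v=r+Pv$, rewritten as $(I-P)v=r$, has the single solution $v=(I-P)^{-1}r=\sum_{t\ge 0}P^t r$. For this I will read the equation as the paper's reward recursion intends it. Once the patient reaches the terminal (deceased) state $n$, no further transitions accrue reward, so the continuation operator in \eqref{eq:life_expectancy} is the transition matrix with the outgoing mass of state $n$ removed. This is the same convention the paper uses for the Metropolis--Hastings construction: ``update transition out of absorbing state as $0$''. With this convention $P$ is entrywise nonnegative and substochastic. Its rows for the non-terminal states sum to at most one, and the row for the terminal state is zero. Every standard solution of the stated equation then coincides with this one.

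\textbf{Step 1 (reachability).} I will use the modelling assumption of the subsection ``Life Expectancy'': the deceased state is the unique absorbing state, and the chain's steady state is $\pi=e_n$. This means the terminal state is reachable from every state. Hence there is an $m\le n$ and an $\varepsilon>0$ such that, from every starting state $i$, the chain hits $n$ within $m$ steps with probability at least $\varepsilon$.

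\textbf{Step 2 (strict contraction).} Because mass entering $n$ is killed, every row sum of $P^m$ is at most $1-\varepsilon$. Therefore $\|P^m\|_\infty\le 1-\varepsilon<1$, and so the spectral radius satisfies $\rho(P)\le(1-\varepsilon)^{1/m}<1$.

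\textbf{Step 3 (invertibility and uniqueness).} Since $\rho(P)<1$, the value $1$ is not an eigenvalue of $P$, so $I-P$ is nonsingular. The Neumann series $\sum_t P^t$ converges to $(I-P)^{-1}$. Existence follows from $v=\sum_t P^t r$. For uniqueness, suppose $v,v'$ are two solutions and set $w=v-v'$. Then $w=Pw=P^m w$, so $\|w\|_\infty\le(1-\varepsilon)\|w\|_\infty$, which forces $w=0$. The series representation also confirms the interpretation of the solution as the expected accumulated reward.

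\textbf{Main obstacle.} The delicate point is Step 1 together with the convention on $P$. If $P$ is taken as a fully stochastic matrix that keeps the self-loop $P(n,n)=1$, then $1$ is always an eigenvalue of $P$ and $I-P$ is singular. In that case uniqueness fails, since one can add any multiple of $e_n$ to a solution. So the proof must state explicitly that the reward recursion uses the killed (substochastic) operator. It must also state that the terminal state is reachable from every state, so that no closed class avoids death. I would first try to justify reachability directly from $\pi=e_n$ being the unique steady state, because any closed class not containing $n$ would carry its own stationary distribution and contradict that uniqueness.
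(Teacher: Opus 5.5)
Your argument is correct, but it follows a different route from the paper's.

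The paper keeps $P$ stochastic, with $P(n,n)=1$, so $I-P$ is singular. It notes that every $\beta\vec e$ solves the homogeneous equation, where $\vec e$ is the all-ones vector. It gets existence from the Fredholm alternative: $r$ lies in the column space of $I-P$ because the left null vector $\pi=e_n$ satisfies $\pi r=0$. It then picks out one solution by imposing $\pi v=0$, i.e.\ $v_n=0$. So the paper really proves uniqueness modulo that normalization.

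You build the normalization into the operator instead. Your killed matrix has the same rows as the stochastic one for $i\neq n$, and its $n$-th row reads $v_n=r_n=0$. Its unique solution is therefore exactly the paper's normalized solution. That one-line observation is what your sentence ``every standard solution coincides with this one'' should state.

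Your route buys three things:
\begin{itemize}
\item genuine invertibility of $I-P$;
\item the explicit series $v=\sum_t P^t r$, i.e.\ expected time to absorption via the fundamental matrix;
\item an explicit statement of the hypothesis actually needed, namely that $n$ is reachable from every state.
\end{itemize}
The paper uses that hypothesis tacitly when it treats $\{\beta\vec e\}$ as the whole kernel of $I-P$ and $\pi$ as the only left null vector. The paper's route, in turn, works directly with the stochastic $P_k$ used in the simulations, at the cost of the side condition.

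Three small fixes are needed.
\begin{itemize}
\item In your ``main obstacle'' remark, the homogeneous solutions for stochastic $P$ are multiples of the all-ones vector, not of $e_n$. Here $e_n$ is the \emph{left} null vector, and $(I-P)e_n^{\mathsf T}$ has entries $-P_{in}$ for $i\neq n$. At least one of these is nonzero by reachability, so $e_n$ is not a right null vector.
\item The paper's Metropolis--Hastings line (``update transition out of absorbing state as $0$'') zeroes the off-diagonal entries of row $n$, which leaves $P(n,n)=1$. It does not zero the row. The substochastic convention is therefore your own modelling choice, justified by the equivalence above, not the paper's.
\item With row $n$ zeroed, the $i$-th row sum of $P^m$ equals $\Pr_i(\tau_n\geq m)$. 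Step 2 therefore needs hitting within $m-1$ steps. Use $P^{m+1}$ instead of $P^m$ and the bound goes through.
\end{itemize}
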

\begin{proof}
See Appendix \ref{section: proof of life expectancy lemma}.
\end{proof}

\subsection{Basic Limits}
Due to space constraints, we give the achievable fundamental limits of our solutions as a single unified lemma, instead of a sequence of theorems. 

\begin{lemma} \label{lemma: eigenvalue of convex combination}
Given category \(C\) and an aperiodic state transition matrix \(P_{k+1|C}\) with a single absorbing state and \(n\) distinct eigenvalues \(\lambda_1^n\) such that \(\abs{\lambda_i}\geq\abs{\lambda_j}\) for all \(1\leq i<j\leq n\),

\begin{enumerate}
    \item the set of its eigenvalues \(\text{eig}(P_{k+1|C})\) is a subset of the convex hull of the eigenvalues of \(P_{k|C}\) and \(T_k\),
    \item for any \(\alpha_{k|C}\) and \(\lambda\in\eig P_{k+1|C}\), there exists a \(\lambda'\in\eig P_{k|C}\) such that, \(\abs{\lambda}\leq \abs{\lambda'}+\alpha_{k|C}2\sqrt{2}\),
   
    \item for the second largest absolute value eigenvalues \(\lambda_2,\lambda_2',\lambda_2''\) of \(P_{k+1|C},P_{k|C},T_k\), we have \(\abs{\lambda_2}\leq \max\{\abs{\lambda_2'},\abs{\lambda_2''}\}\),
    \item for a constant \(P_{k+1|C}=P\), \(P^k=\sum_{i=1}^n\lambda_i^ku_iv_i\) where \(u_i\in\mathbb{R}^{n\times 1}\) and \(v_i\in\mathbb{R}^{1\times n}\) are right and left eigenvectors corresponding to \(\lambda_i\), \(\lambda_1=1\) and for all \(i>1\), \(\abs{\lambda_i}<1\), \(v_1=e_n\) is the unique steady state distribution, \(\norm{P^k-P_\infty}_{\text{max}}\to 0\) exponentially where maximum norm \(\norm{\cdot}_{\text{max}}\) is the element-wise norm and for any initial distribution \(\pi_{0,X}\) and distribution \(g\), \(\pi_{0,X}P^k\to v_1, H(\pi_{0,X}P^k)\to H(v_1)=0\) and \(JSD(\pi_{0,X}P^k\|g)\to JSD(v_1\|g)\) exponentially.
\end{enumerate}
\end{lemma}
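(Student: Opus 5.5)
I would prove the four parts separately, using the update rule \eqref{equation: transition matrix update}, $P_{k+1|C}=(1-\alpha)P_{k|C}+\alpha T_k$ with $\alpha=\alpha_{k|C}$, and standard perturbation facts for stochastic matrices.

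\textbf{Part 1.} The most natural route is a commuting-type argument. If $P_{k|C}$ and $T_k$ share an eigenbasis (for example, both are triangularizable in a common basis), then the eigenvalues of the convex combination are $(1-\alpha)\lambda'_i+\alpha\lambda''_i$. These lie in the convex hull of $\eig P_{k|C}\cup\eig T_k$. In general the claim needs such a structural hypothesis. I would therefore make it explicit and invoke Schur simultaneous triangularization, read off the diagonal, and conclude.

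\textbf{Part 2.} I would not rely on Part 1 here and would use perturbation bounds instead. Write $P_{k+1|C}=P_{k|C}+\alpha(T_k-P_{k|C})$. Both matrices are row stochastic, so $\norm{T_k-P_{k|C}}_\infty\le 2$. For the constant $2\sqrt2$, I would bound the Frobenius or spectral norm of the difference of two stochastic rows. For a single row, $\norm{a-b}_2\le\sqrt{2}$ since $\norm{a-b}_2^2\le\norm{a}_2^2+\norm{b}_2^2\le 2$. I would then combine this with a Bauer--Fike argument using the eigenvector conditioning of the diagonalizable $P_{k|C}$; the $n$ distinct eigenvalues guarantee diagonalizability. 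I expect this to be the \textbf{main obstacle}. Bauer--Fike introduces a condition number $\kappa(V)$, and removing it to get exactly $2\sqrt2$ will likely require either normality or an additional assumption. The first thing I would try is the Hoffman--Wielandt or Elsner bound, or else restricting to the case where the conditioning is $1$.

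\textbf{Part 3.} I would restrict to the $P$-invariant complement of the absorbing direction. Both matrices fix $e_n$ as a stationary row vector and have right eigenvector $\mathbf 1$. On the quotient space $\mathbb{R}^n/\mathrm{span}(\mathbf1)$, the spectral radius of the induced map equals $|\lambda_2|$. Subadditivity of a suitable induced norm, for example the Dobrushin coefficient, then gives $\rho\le(1-\alpha)\rho'+\alpha\rho''\le\max\{\rho',\rho''\}$. Equality of spectral radius and norm again needs care. I would use that the spectral radius is the infimum over norms, or assume simultaneous triangularizability as in Part 1.

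\textbf{Part 4.} This part is standard.
\begin{itemize}
\item Distinct eigenvalues give the diagonalization $P=U\Lambda U^{-1}$, hence $P^k=\sum_i\lambda_i^k u_iv_i$.
\item Stochasticity gives $\lambda_1=1$ with $u_1=\mathbf 1$.
\item A single absorbing state that is reachable from every state (the chain is absorbing), together with aperiodicity, gives $|\lambda_i|<1$ for $i>1$ and a unique stationary vector $v_1=e_n$.
\item It follows that $\norm{P^k-u_1v_1}_{\max}\le\sum_{i>1}|\lambda_2|^k\norm{u_iv_i}_{\max}$, which decays exponentially.
\item For any initial distribution, $\pi_{0,X}P^k\to e_n$ at the same rate.
\item Continuity of $H$ and of $\mathrm{JSD}$ transfers the convergence: $\mathrm{JSD}$ is Lipschitz in total variation up to log factors, and $H(\pi_{0,X}P^k)=O(\epsilon\log(1/\epsilon))$ with $\epsilon=O(|\lambda_2|^k)$. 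This yields exponential rates, modulo the polynomial factor in $k$, and $H(v_1)=0$.
\end{itemize}
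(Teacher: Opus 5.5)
Your suspicion about Parts 1 and 3 is right, and it goes further than a missing hypothesis: under the lemma's own assumptions both parts are false, and the paper's proof does not get around this. The paper argues Part 1 by a separating hyperplane plus Toeplitz--Hausdorff, asserting $v^*P_{k|C}v\in\conv(\eig P_{k|C})$. But the numerical range only \emph{contains} the convex hull of the spectrum, with equality for normal matrices. Part 3 is then declared a consequence of Parts 1--2, which it would not be even if they held, since the hull contains the segment from $1$ to $\lambda_2'$. Concretely, take $\alpha_{k|C}=\tfrac12$ and
\[
P_{k|C}=\begin{pmatrix}0.2&0.4&0.4\\0&0.3&0.7\\0&0&1\end{pmatrix},\quad
T_k=\begin{pmatrix}0.3&0&0.7\\0.4&0.2&0.4\\0&0&1\end{pmatrix}.
\]
The transient block of $P_{k+1|C}$ then has diagonal entries $0.25$ and off-diagonal entries $0.2$. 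So $P_{k+1|C}$ is aperiodic, has the single absorbing state $3$, and has distinct eigenvalues $1,0.45,0.05$, while $\eig P_{k|C}=\eig T_k=\{1,0.3,0.2\}$. Thus $0.05\notin[0.2,1]$ and $0.45>0.3$.

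In particular, your Dobrushin/quotient-norm route for Part 3 cannot close. Subadditivity gives only $|\lambda_2|\le(1-\alpha_{k|C})\delta(P_{k|C})+\alpha_{k|C}\delta(T_k)$, in terms of contraction coefficients $\delta(\cdot)$ rather than $|\lambda_2'|,|\lambda_2''|$. The fact that the spectral radius is an infimum of norms does not help, because no single norm is nearly optimal for both matrices. Adding simultaneous triangularizability, as you suggest, is a legitimate repair: both parts are then read off the common diagonal, where the pair $(1,1)$ is forced because $1$ is an extreme point of the unit disk. But you should say plainly that this proves a different lemma.

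Part 2 is where your plan has a gap of its own. You treat it as a perturbation bound on $|\lambda-\lambda'|$ and correctly see that $\kappa(V)$ cannot be removed. It is unbounded over nonnormal stochastic matrices, Hoffman--Wielandt needs normality, and Elsner's bound scales like $\alpha_{k|C}^{1/n}$. Moreover, the distinct-eigenvalue hypothesis is on $P_{k+1|C}$, not on $P_{k|C}$, so Bauer--Fike centered at $P_{k|C}$ is not even available. The paper follows the same Bauer--Fike route and simply asserts $\kappa(X)\le\sqrt2$. The missing observation is that the claim only bounds $|\lambda|$, and then it is immediate. Take $\lambda'=1\in\eig P_{k|C}$, with right eigenvector $\mathbf{1}$. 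Since $P_{k+1|C}$ is row stochastic for $\alpha_{k|C}\in[0,1]$, every $\lambda\in\eig P_{k+1|C}$ satisfies $|\lambda|\le\norm{P_{k+1|C}}_\infty=1=|\lambda'|$.

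Your Part 4 is correct and more complete than the paper's, which handles only the entropy and relies on an unproved lower bound $p_{ijk}\ge c_3^{-k}$ (false whenever an entry vanishes). You can also drop the reachability and aperiodicity assumptions there. Simplicity of the eigenvalue $1$ already excludes any closed class other than $\{n\}$, so the transient block has spectral radius below $1$.
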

\begin{proof}
See Appendix \ref{section: proof of lemma}.
\end{proof}
\section{Data Integration}
\label{sec:data_integration}

We use the AACR Project GENIE Biopharma Collaborative (BPC) colorectal cancer (CRC) v2.0-public data release as a real-world longitudinal clinical dataset to support algorithm development in this phase. The public CRC release includes 1,485 patients treated at Memorial Sloan Kettering Cancer Center (MSKCC), Dana-Farber Cancer Institute (DFCI), and Vanderbilt-Ingram Cancer Center (VICC), and provides harmonized patient-level clinical trajectories across diagnosis, treatment, and outcomes.

To construct patient trajectories, we infer discrete disease states
$\mathcal{S}=\{A,B,C,D\}$
from longitudinal monitoring signals. Imaging and medical oncologist assessments provide curated categorical status updates (improving/responding, stable/no change, progressing/worsening, or not stated/intermediate). We map these longitudinal observations to four states: \textbf{Attenuation of tumor} ($A$), \textbf{Balanced response} ($B$), \textbf{Critical condition} ($C$), and the terminal absorbing state \textbf{Deceased} ($D$). The Medical Oncologist Assessment is curated at around one assessment per month, enabling a standardized longitudinal view of progression aligned with imaging events (Fig.~\ref{fig:patient_timeline}).

We estimate baseline (treatment-free) transition dynamics by extracting transitions from time windows without active treatment to approximate natural disease progression, while death transitions are retained across treated and untreated intervals to better capture realistic mortality risk (Fig.~\ref{fig:transitions}). Summary statistics for observation events and transition counts are reported in Table~\ref{tab:genie_crc_processing} in the appendix.
\begin{figure}[!t]
  \centering
  \includegraphics[width=0.85\columnwidth]{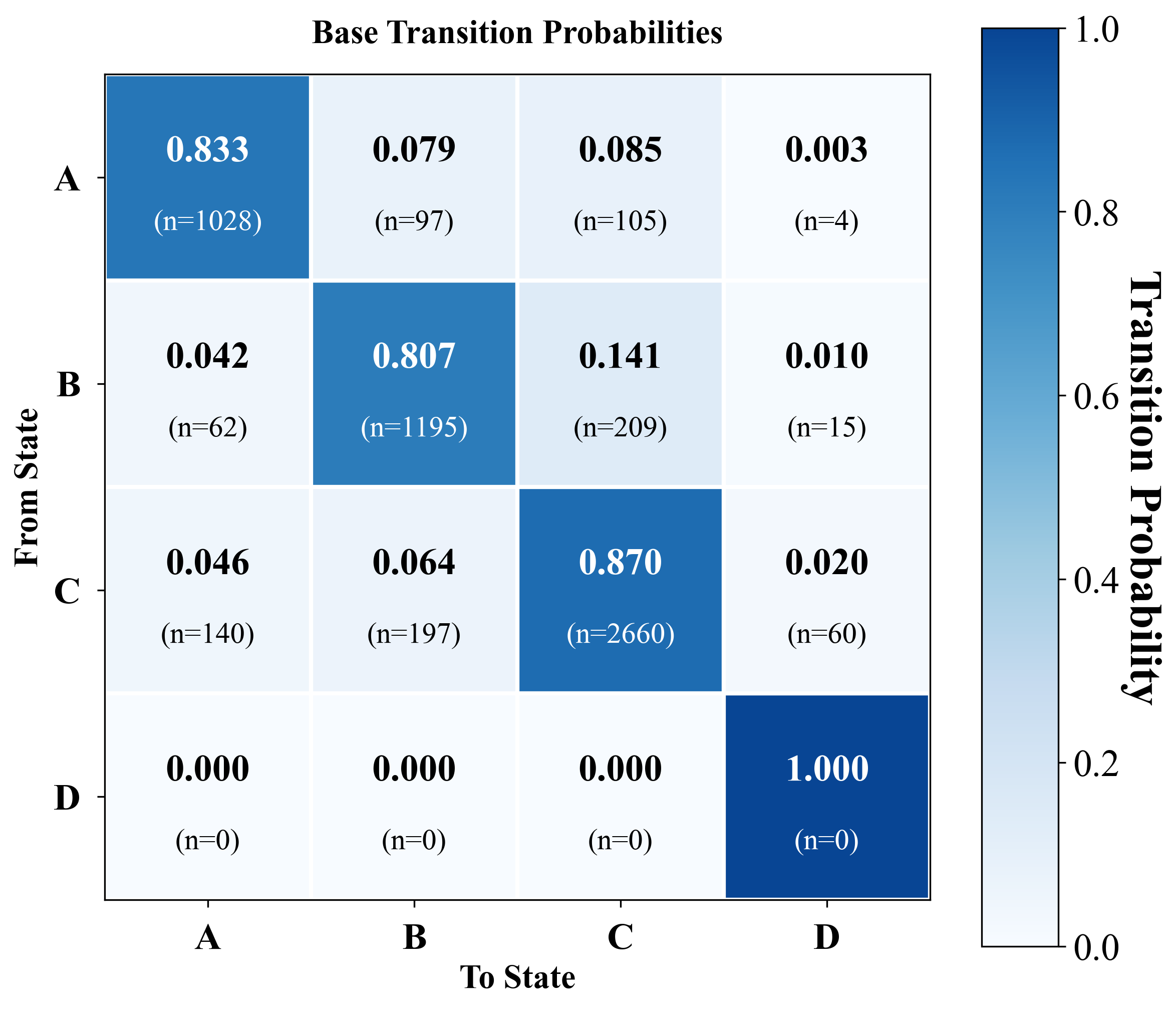}
  \includegraphics[width=0.85\columnwidth]{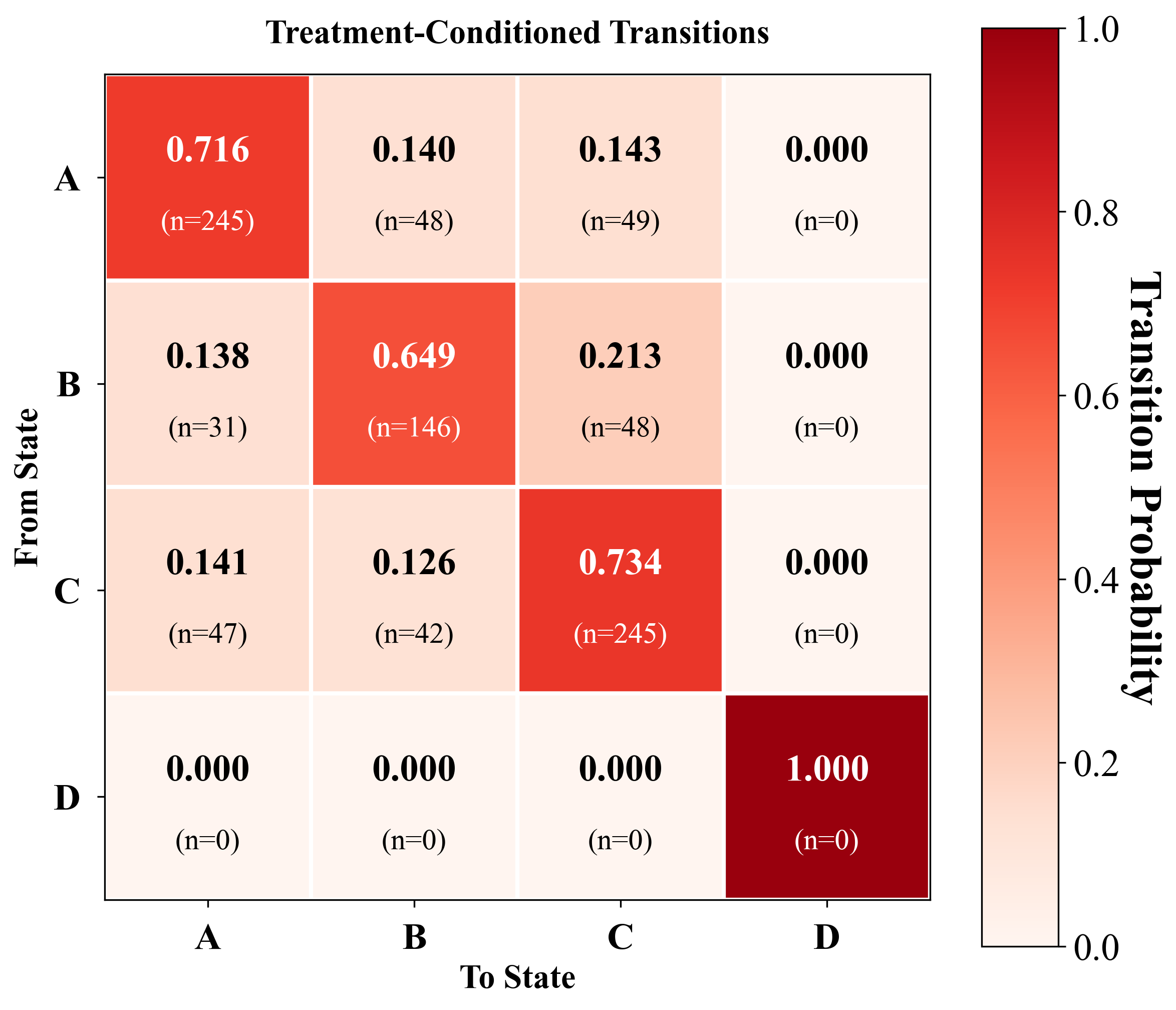}
  \caption{Empirical transition probabilities. \textbf{Top}: Baseline (untreated) transitions $P_{0\mid C}$ for early-stage, middle-aged patients. \textbf{Bottom}: Treatment-conditioned transitions during chemo.}
  \label{fig:transitions}
  \vspace{-0.1in}
\end{figure}
For treatment-conditioned dynamics, we parse regimen records into contiguous treatment windows and map each window to one of seven action classes (EGFR, VEGF, BRAF, HER2, IO, ChemoOnly, Investigational) using the following priority: check targeted agents first, then IO, and default to ChemoOnly when only cytotoxic agents are present (see Table~\ref{tab:genie_crc_processing} in the appendix). We estimate empirical action-conditioned transition matrices $P_{\cdot \mid c, a}$ for each category $c$ and action $a$ from the observed transitions that occur while $a$ is active when the observation begins. In CRC, VEGF/ChemoOnly dominate the data support, while IO/HER2/BRAF are sparse and should be interpreted with caution.

Finally, to capture clinically meaningful heterogeneity without using genomics, we stratify patients by stage at diagnosis (early stage I--III vs. advanced stage IV) and by age group ($<50$, 50--70, $>70$) (see Table~\ref{tab:genie_crc_processing} in the appendix).
\section{Results}
\label{sec:results}

We simulate a patient trajectory with the Constant Tuning Agent in Section \ref{subsection: solution agents} using an entropy threshold \(h=1.6\) over a time horizon of \(K = 100\) steps, representing approximately \(2-3\) years of treatment. Tuning limits \(\alpha_k\) are drawn from an exponential distribution with a time-decaying mean. Figure \ref{figure: true and belief state} shows the true patient state oscillating primarily between Attenuation (A) and Balanced (B) states over 100 time steps, with occasional transitions to Critical (C). The agent's belief state distribution tracks the transitions through marginalized probabilities and successfully captures the underlying state dynamics despite partial observability.
\begin{figure}[!t]
\centering
\includegraphics[width=0.85\columnwidth]{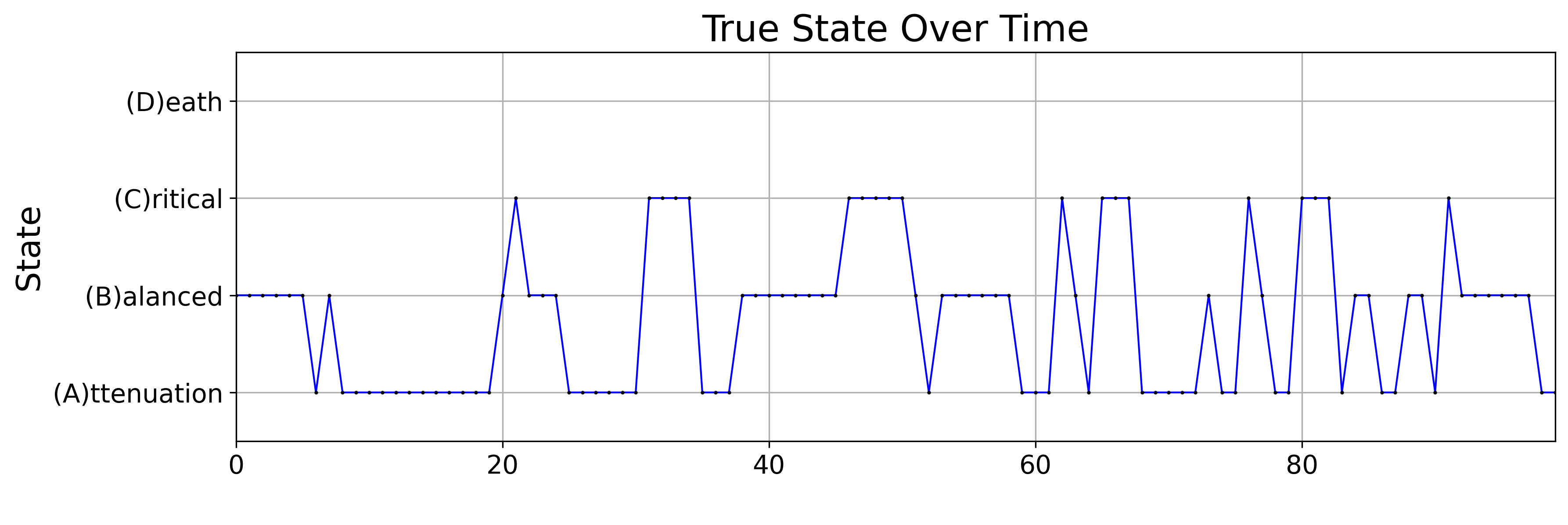}
\includegraphics[width=0.85\columnwidth]{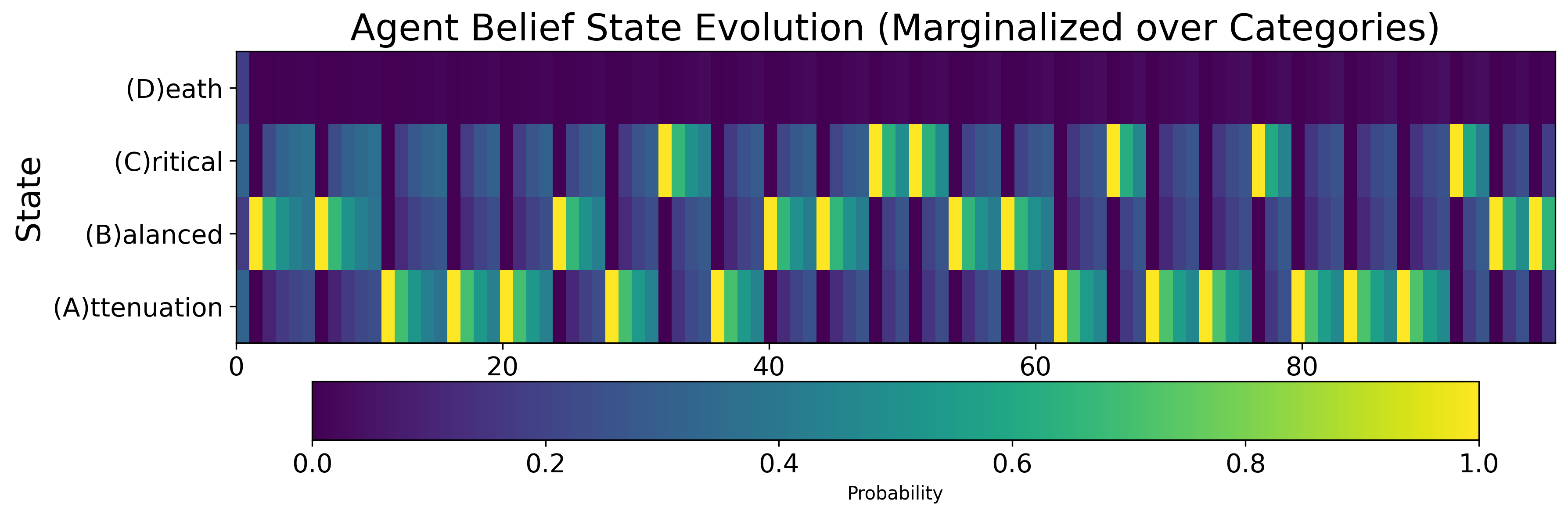}
\vspace{-0.2in}
\caption{True patient state (top) and agent belief distribution (bottom) over 100 time steps. The belief successfully tracks the underlying state dynamics despite partial observability.}
\label{figure: true and belief state}
\vspace{-0.1in}
\end{figure}
The entropy threshold measurement policy triggers observations approximately every 4--5 steps when belief entropy exceeds \(h=1.6\). This creates a characteristic sawtooth pattern where entropy accumulates between measurements and resets upon observation, demonstrating the agent's adaptive information acquisition strategy under budget constraints (Figure~\ref{figure: measurement decisions}). The regular measurement intervals balance state tracking accuracy with resource conservation. This pattern indicates that measurements are concentrated at higher-uncertainty periods and are suppressed when the belief is more confident. \vspace{-0.1in}
\begin{figure}[ht]
  \begin{center}
    \includegraphics[width=0.85\columnwidth]{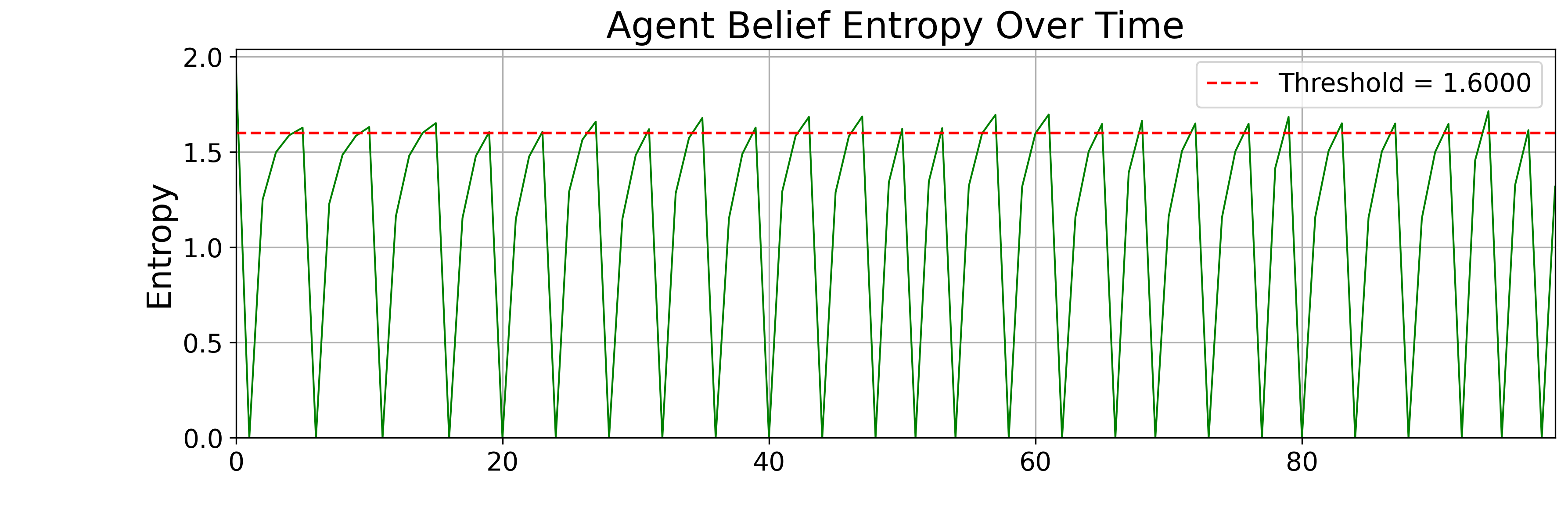}
    \vspace{-0.25in}
    \caption{Measurement decisions and belief entropy}
    \label{figure: measurement decisions}
  \end{center}
\vspace{-0.15in}
\end{figure}
A key finding emerges in the category identification dynamics (Figure \ref{figure: category belief evolution}): the true patient category (Category 5, brown line) is detected after 100 time steps, with Category 1 also receiving high posterior probability. This reveals observational equivalence between categories and demonstrates that the entropy-threshold measurement policy prioritizes state tracking as well as active category identification. Multiple categories produce similar observable state trajectories, making discrimination challenging with limited data.
\begin{figure}[ht]
  \begin{center}
\centerline{\includegraphics[width=0.85\columnwidth]{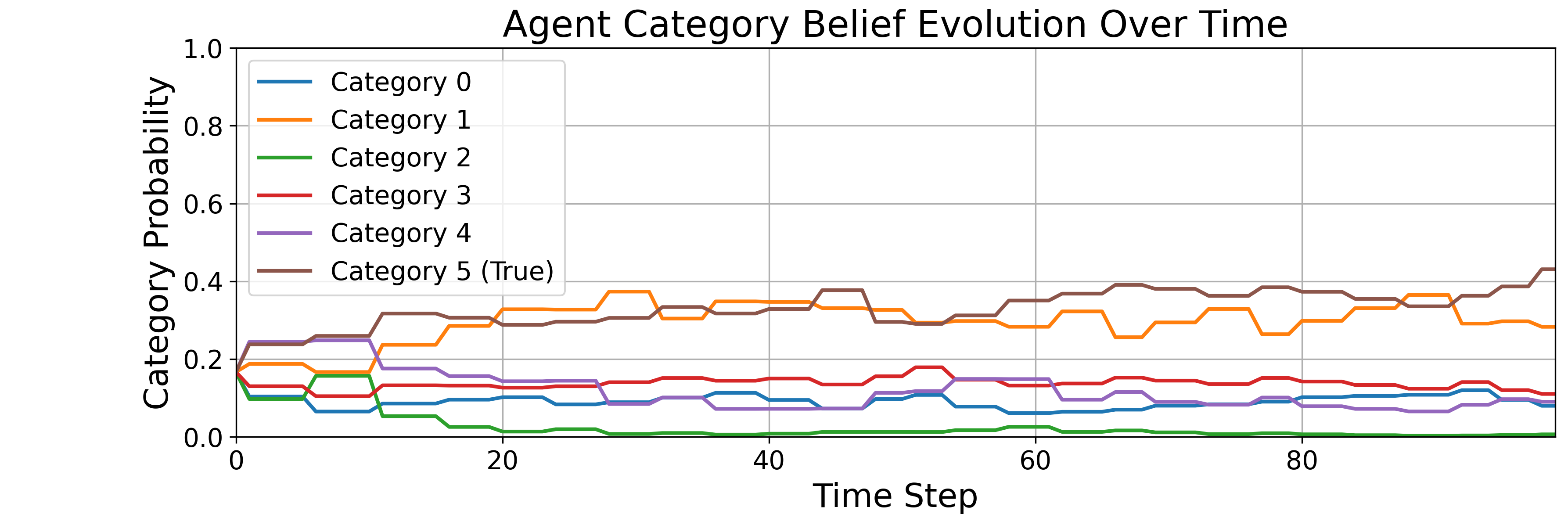}}
    \caption{Category belief evolution}
    \label{figure: category belief evolution}
  \end{center}
\vspace{-0.2in}
\end{figure}
Alongside correct category identification, the agent also maintains robust control performance. Figure \ref{figure: L2 distance evolution} shows decreasing tuning matrix error ($\ell_2$ distance down from \(\sim\) 1.2) between the constant tuning matrix and true category-specific transition matrix throughout the simulation. This persistent mismatch arises since the tuning mechanism only partially shifts the transition dynamics toward the target matrix $T$ at each step, with the tuning rate \(\alpha_k\) controlling the degree of influence. Since the environment forces \(\alpha_k < 1\), the system never fully converges to \(T\) but reaches a steady-state blend between the evolving dynamics and the target.
\begin{figure}[ht]
  \begin{center}
\centerline{\includegraphics[width=0.85\columnwidth]{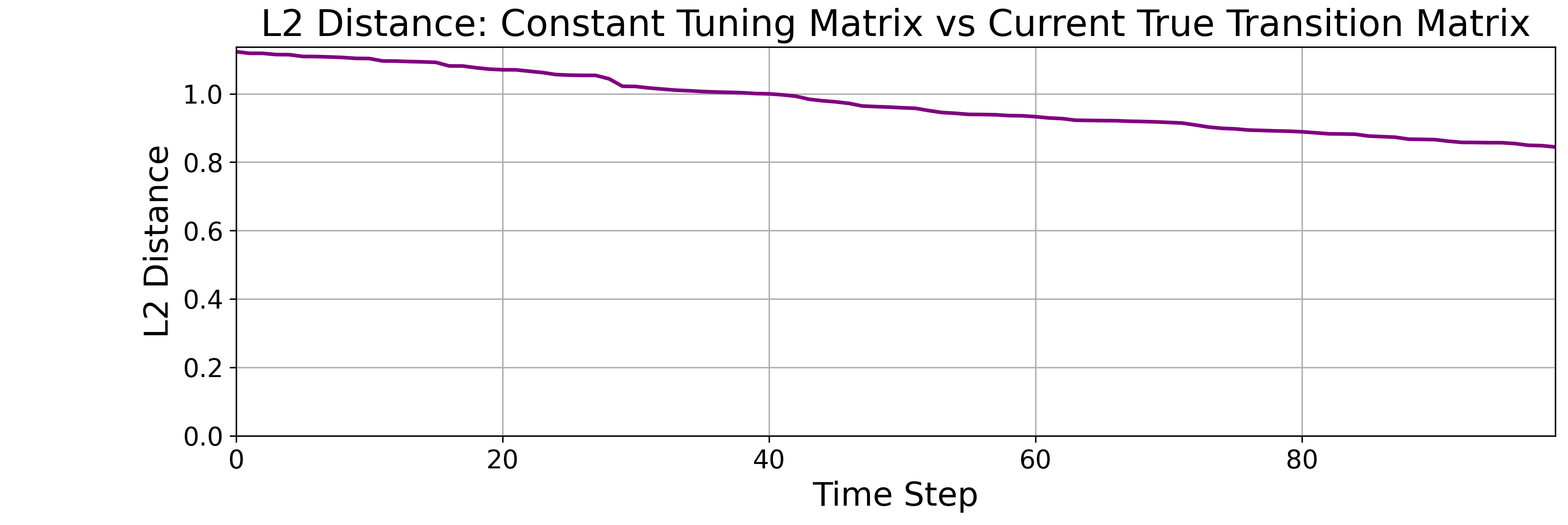}}
    \caption{$\ell_2$ distance between tuning matrix}
    \label{figure: L2 distance evolution}
  \end{center}
\vspace{-0.2in}
\end{figure}
Figure \ref{figure: distance to desired state}  demonstrates convergence of the $\ell_2$ distance between the current and desired steady-state distributions, decreasing from approximately $0.4$ at $k=0$ decaying to almost $0.1$. Together with the learned category shown in Figure \ref{figure: category belief evolution}, these results indicate robustness to category uncertainty where the agent achieves bounded control error and stable performance. This robustness is reflected not only in the steady-state tracking behavior but also in the overall measurement--cost trade-off.
\begin{figure}[ht]
  \begin{center}
\centerline{\includegraphics[width=0.85\columnwidth]{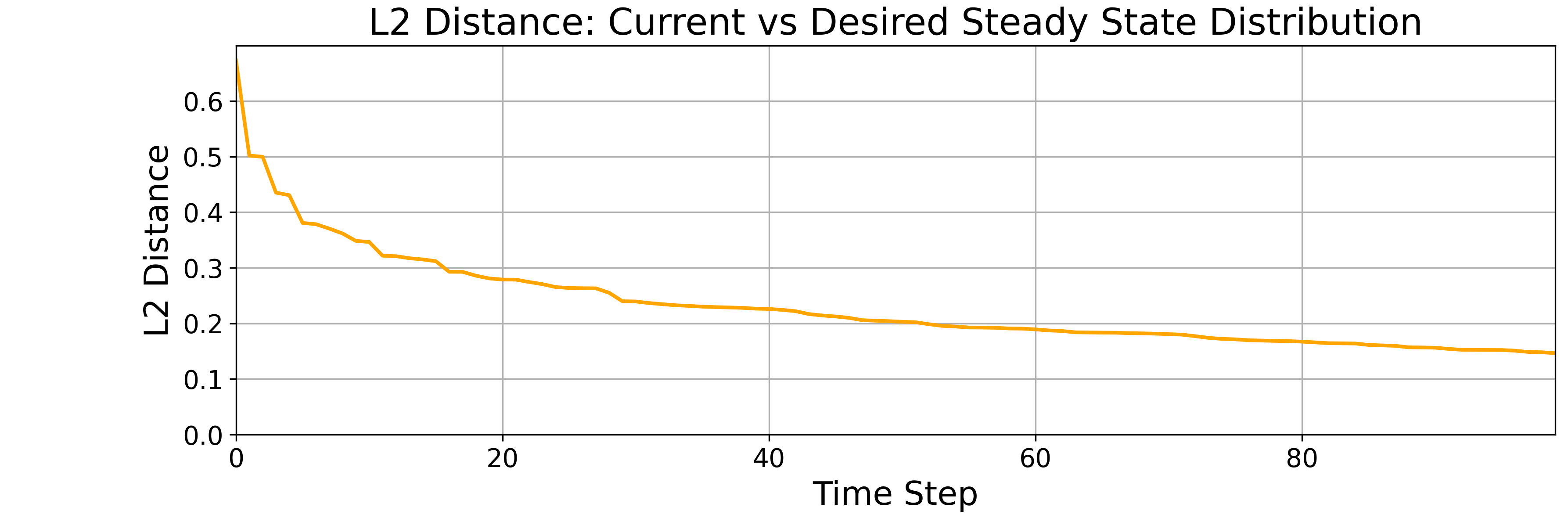}}
    \caption{$\ell_2$ distance between current and desired steady-state distributions}
    \label{figure: distance to desired state}
  \end{center}
\vspace{-0.25in}
\end{figure}
We compared five measurement strategies over 100 simulated patient trajectories (100 steps each) to quantify the trade-off between information acquisition and control performance. Figure \ref{figure: total measurement} summarizes total observations and total cost across agents which highlights the clear differences in measurement efficiency and overall objective value.
\begin{figure}[ht]
  \begin{center}
\centerline{\includegraphics[width=0.85\columnwidth]{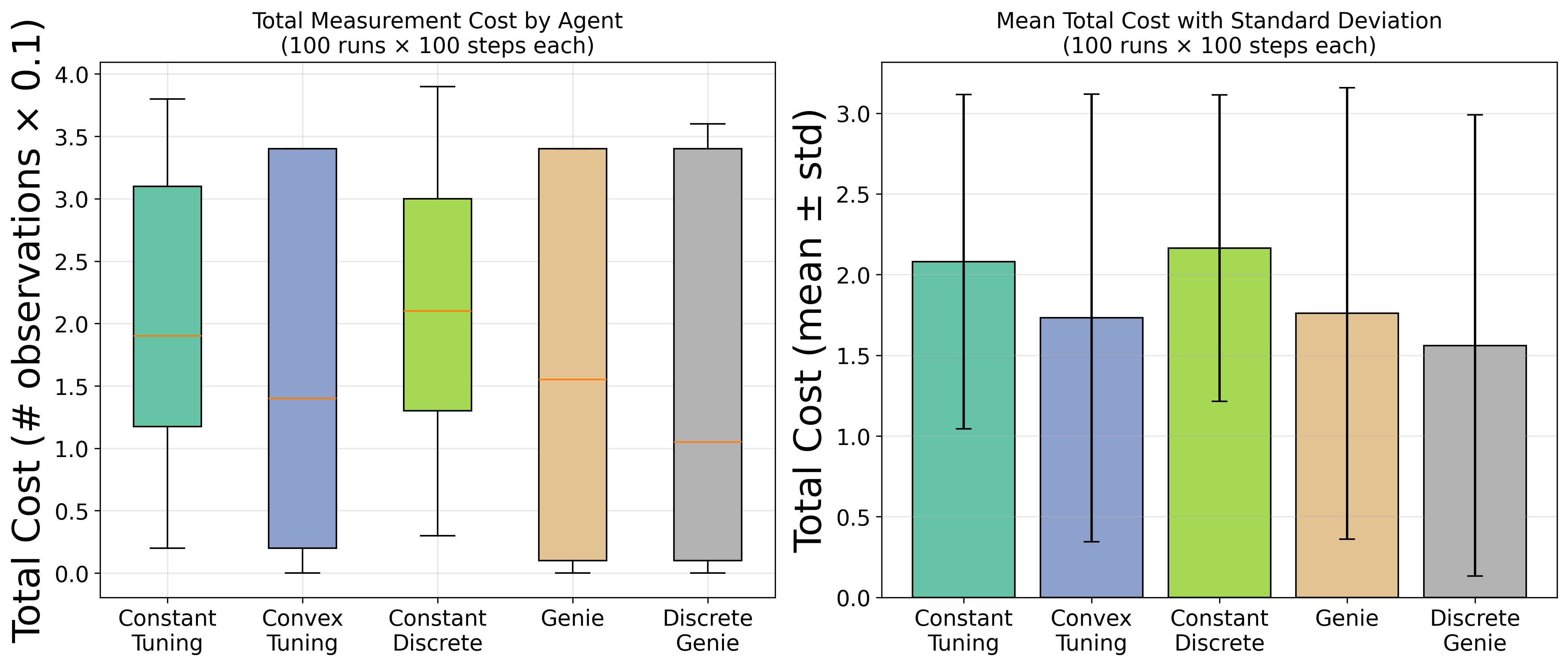}}
    \caption{Total measurement and total cost statistics for five measurement policies evaluated over 100 independent patient trajectories of 100 time steps each.}
    \label{figure: total measurement}
  \end{center}
\vspace{-0.25in}
\end{figure}
The  Discrete Genie achieves the lowest median cost ($1.0$ observation) and a low mean cost ($1.6 \pm 1.4$), consistent with the advantage of oracle category knowledge for selectively triggering measurements. {Convex Tuning attains the best mean performance ($ 1.75 \pm 1.4$), suggesting that belief-weighted tuning can approach oracle efficiency without perfect identification. In contrast, Constant Discrete incurs the highest average cost ($2.2 \pm 1.0$), indicating that fixed schedules are less measurement-efficient than entropy-driven policies.

To illustrate the treatment effect on expected patient survival, Figure~\ref{figure: life expectancy} shows the instantaneous expected life expectancy $v_i$ (Lemma~\ref{lemma: life expectancy}) of the evolving transition matrix $P_k$ for an advanced-stage young patient receiving EGFR-targeted therapy. As treatment blends into the baseline dynamics via $P_{k+1} = (1-\alpha_k)P_k + \alpha_k T$, the expected remaining life from state~A (Attenuation) increases from $237$ to $327$ time steps, with similar improvements from states B and C. The concave saturation shape reflects the exponentially decaying tuning authority $\alpha_k$, concentrating the majority of therapeutic benefit in the early treatment window.
\begin{figure}[ht]
  \begin{center}
\centerline{\includegraphics[width=0.85\columnwidth]{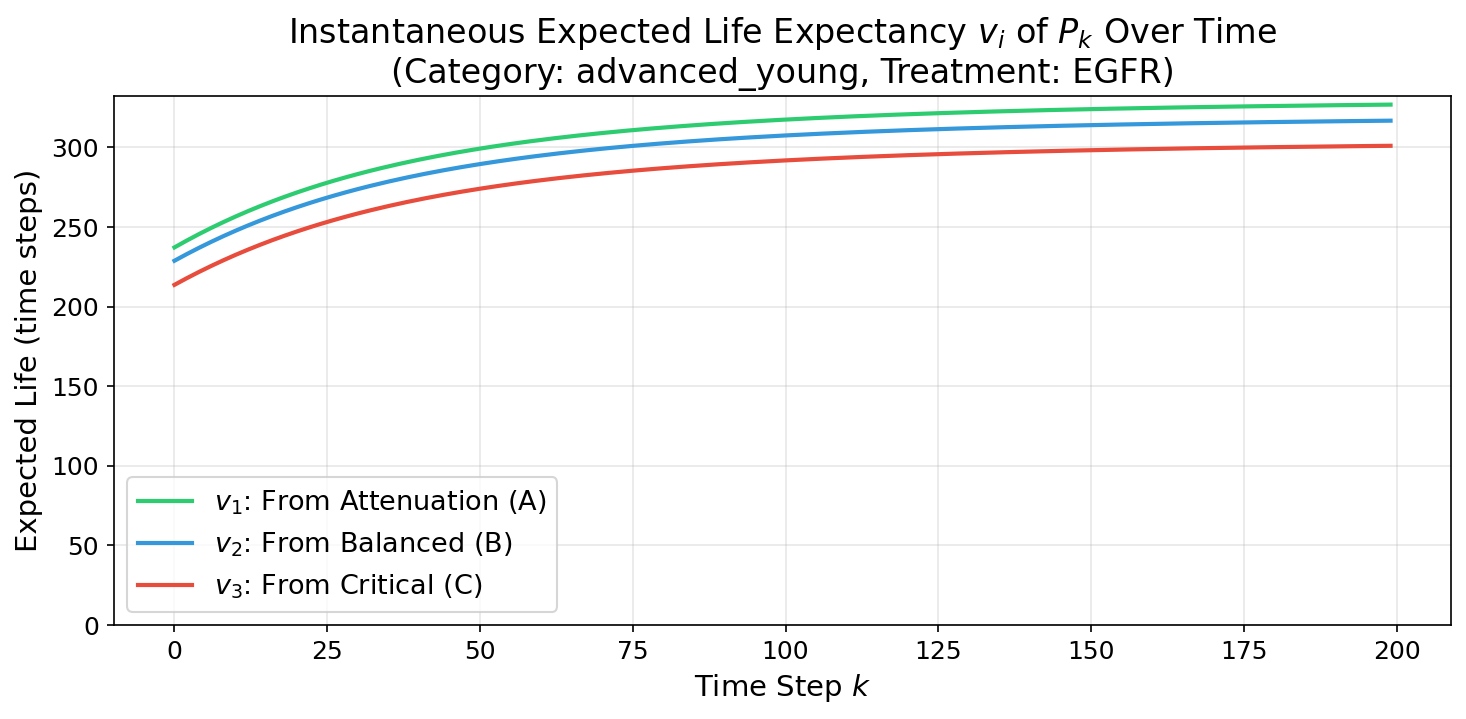}}
    \caption{Instantaneous life expectancy $v_i$ from each transient state as the transition matrix $P_k$ evolves under EGFR treatment for the advanced-stage young patient category. Life expectancy is computed at each step by solving $(I-P_k)v=r$ (Lemma~\ref{lemma: life expectancy}).}
    \label{figure: life expectancy}
  \end{center}
  \vspace{-0.3in}
\end{figure}

\section{Conclusion}


We presented a belief-space control framework for personalized cancer treatment under partial observability and explicit measurement constraints, grounded in the free-energy principle and active inference. Our approach targeted the evolution of patient state distributions rather than pointwise trajectories, capturing the plastic nature of oncologic interventions and latent patient heterogeneity. Empirical results using real clinical data demonstrate that the proposed framework simultaneously drives the belief over patient state toward a desired target distribution and maintains informative estimates of latent patient categories, all within a realistic measurement budget and over a clinically meaningful treatment horizon. 

In this work, the agent optimizes a fixed control objective given its assumed generative model, but it does not learn to improve its own decision-making strategy over time. While inference is adaptive through belief updates, control is fixed and is based on models learned offline. A promising direction for future work is the integration of online learning–based action selection within the active inference framework, enabling adaptive optimization of both treatment and measurement policies directly from data.

\bibliographystyle{IEEEtran}
\bibliography{bibliography}

\appendices
\section{Notation}

Table \ref{table: notation} summarizes the mathematical notation used throughout this paper.

\begin{table*}[!t]
\caption{Notation for variables}
\label{table: notation}
\centering
\footnotesize
\begin{tabular}{@{}ll ll ll@{}}
\toprule
\textbf{Symbol} & \textbf{Definition} &
\textbf{Symbol} & \textbf{Definition} &
\textbf{Symbol} & \textbf{Definition} \\
\midrule
$A$ & action pair of $(M,T)$
& $X$ & state
& $\alpha$ & tuning limit \\

$C$ & category
& $Y$ & observation
& $\gamma$ & discount factor \\

$H$ & entropy
& $d$ & number of categories
& $\theta$ & tuning policy \\

$J$ & cost function
& $e$ & standard basis vector
& $\kappa$ & control cost \\

JSD & Jensen-Shannon div.
& eig & set of eigenvalues
& $\lambda$ & eigenvalue \\

KL & Kullback-Leibler div.
& $i,j$ & dummy variables
& $\mu$ & measurement policy \\

$M$ & measurement decision
& $k$ & time
& $\pi_{X,C}$ & category and state belief \\

$P$ & transition matrix
& $n$ & number of states
& $\pi_X^*$ & desired/steady distribution \\

$T$ & tuning/treatment matrix
& $p$ & probability mass function
& $\tau$ & tracking cost \\
\bottomrule
\end{tabular}
\end{table*}

\section{Proof of Lemma \ref{lemma: life expectancy}} \label{section: proof of life expectancy lemma}
Let $I$ be the $n\times n$ identity matrix. The homogeneous portion, $(I-P)\vec{v}=0$ of the equation has infinite solutions in the form $\beta \vec{e}$ for all real $beta$, where $\vec{e}=[1\ 1\ \ldots \ 1]^{\mathsf T}$. So, if $\vec{v}^*$ solves the equation, then all $\vec{v}=\vec{v}^*+\beta \vec{e}$ also solves it. A solution to the equation exists if $\vec{r}$ is in the column space of $I-P$. We also know that the steady-state distribution $\pi_X$ is in the left null space of $I-P$. Therefore, a solution exists if and only if $\pi_X \vec{r}=0$, which is true, since $\pi_X$ has all $0$ entries other than the terminal state and $\vec{r}$ has a $0$ entry for the terminal state.

Now, let $\vec{v}^*$ be the coordinates of $\vec{r}$ in the column space of $I-P$. Then, the solution to our core equation has the form $\vec{v}^* + \beta \vec{e}$. The unique solution can be found using the additional condition $\pi_X \vec{v}=0$.

\section{Proof of Lemma \ref{lemma: eigenvalue of convex combination}} \label{section: proof of lemma}
\subsection{Eigenvalues in Convex Hull}
For the new transition matrix \(P_{k+1|C}\), \(\text{eig}(P_{k+1|C})\subset\conv\adjcurl{\eig P_{k|C}\cup\eig T_k}\) where \(\conv\) denotes convex hull.
\begin{proof}
Let \(S=\conv\adjcurl{\eig P_{k|C}\cup\eig T_k}\). For contradiction, assume that there exists an eigenvalue \(\lambda\) of \(P_{k+1|C}\) such that \(\lambda\notin S\). Since \(\lambda\) lies outside the convex hull and this convex hull is a closed convex set in \(\mathbb{C}\), by the Separating Hyperplane Theorem, there exists a hyperplane that strictly separates \(\lambda\) from \(S\). More precisely, there exists a non-zero linear functional \(\phi:\mathbb{C}\to\mathbb{R}\) and a constant \(c\in\mathbb{R}\) such that \(\phi(\lambda)>c\) and \(\phi(S)<c\). We can write \(\phi(z)=\text{Re}(w\cdot z)\) for some non-zero \(w\in\mathbb{C}\), where \(\text{Re}\) denotes the real part. Since \(S\) contains all eigenvalues of both \(P_{k|C}\) and \(T_k\), \(\text{Re}(w\cdot\lambda')<c\) for all eigenvalues \(\lambda'\) of \(P_{k|C}\) and \(\text{Re}(w\cdot\lambda'')<c\) for all eigenvalues \(\lambda''\) of \(T_k\). Since \(\lambda\) is an eigenvalue of \(P_{k+1|C}\), there exists a non-zero eigenvector \(v\) such that \(P_{k+1|C}v = \lambda v\). Thus, \((1-\alpha_{k|C})P_{k|C}v + \alpha_{k|C}T_kv = \lambda v\).

Taking the inner product with \(v^*\), normalizing \(vv^*=1\) and applying the functional \(\phi\) to both sides we get \(\text{Re}(\lambda w) = (1-\alpha_{k|C})\text{Re}(w\cdot v^*P_{k|C}v) + \alpha_{k|C}\text{Re}(w\cdot v^*T_kv)\). By the Toeplitz-Hausdorff Theorem, the numerical range of any matrix is a convex set containing all eigenvalues of that matrix. Thus, \(v^*P_{k|C}v\in\conv(\eig P_{k|C})\) and \(v^*T_kv\in\conv(\eig T_k)\). Since both lie in \(S\), we have \(\text{Re}(w\cdot v^*P_{k|C}v)<c, \text{Re}(w\cdot v^*T_kv)<c\) and \(\phi(\lambda)=\text{Re}(\lambda w) = (1-\alpha_{k|C})\text{Re}(w\cdot v^*P_{k|C}v) + \alpha_{k|C}\text{Re}(w\cdot v^*T_kv)< c\). This contradicts our assumption from that \(\phi(\lambda)>c\).
\end{proof}
\subsection{Eigenvalue Perturbation Bound}
For any \(\alpha_{k|C}\) and \(\lambda\in\eig P_{k+1|C}\), there exists a \(\lambda'\in\eig P_{k|C}\) such that, \(\abs{\lambda}\leq\abs{\lambda'}+\alpha_{k|C}2\sqrt{2}\).

\begin{proof}
Rewrite \(P_{k+1|C}\) as: \(P_{k+1|C}=P_{k|C}+\alpha_{k|C}(T_k-P_{k|C})=P_{k|C}+E\) where \(E =\alpha_{k|C}(T_k-P_{k|C})\) is the perturbation matrix.

By the Bauer-Fike theorem, if \(P_{k|C}\) is diagonalizable with \(P_{k|C}=XDX^{-1}\) where \(D\) contains the eigenvalues of \(P_{k|C}\), then for any eigenvalue \(\lambda\) of the perturbed matrix \(P_{k+1|C}\), there exists an eigenvalue \(\lambda'\) of \(P_{k|C}\) such that \(\abs{\lambda - \lambda'}\leq\kappa(X)\norm{E}_1\) where \(\kappa(X) = \norm{X}_1\norm{X^{-1}}_1\) is the condition number of the eigenvector matrix.

For stochastic matrices, we have \(\norm{E}_1=\norm{\alpha_{k|C}(T_k-P_{k|C})}_1=\abs{\alpha_{k|C}}\norm{T_k - P_{k|C}}_1\). Since both \(T_k\) and \(P_{k|C}\) are stochastic matrices (with entries in \([0,1]\) and rows summing to \(1\), we have \(\norm{T_k - P_{k|C}}_1\leq 2\). For the condition number, stochastic matrices typically have \(\kappa(X) \leq \sqrt{2}\) under appropriate normalization. Therefore \(\abs{\lambda}\leq\abs{\lambda'}+\alpha_{k|C}2\sqrt{2}\).
\end{proof}

\subsection{Second Largest Absolute Value Eigenvalue Upper Bound}
For the second largest absolute value eigenvalues \(\lambda_2,\lambda_2',\lambda_2''\) of \(P_{k+1|C},P_{k|C},T_k\), we have \(\abs{\lambda_2}\leq \max\{\abs{\lambda_2'},\abs{\lambda_2''}\}\).

\begin{proof}
This a consequence of the previous results on convex hull of eigenvalues and perturbation bound.
\end{proof}

\subsection{Exponential Convergence of Distributions and Divergences}
For any initial distribution \(\pi_{0,X}\) and distribution \(g\), \(H(\pi_{0,X}P^k)\to H(v_1)\) and \(JSD(\pi_{0,X}P^k\|g)\to JSD(v_1\|g)\) exponentially.
\begin{proof}
We only show the result for entropy. The result for JSD is similar. Let \(X_{ik}\) denote the state of the Markov chain at time \(k\) where \(X_0=i\). Then, after \(k\) steps, \(\pi_{k,X}=e_iP^k\) and \(H(\pi_{k,X})=-\sum_{j=1}^np_{ijk}\log p_{ijk}\) where \(p_{ijk}\) is the probability of transition from state \(i\) to \(j\) in \(k\) steps.
\begingroup
\setlength{\abovedisplayskip}{3pt}
\setlength{\belowdisplayskip}{3pt}
\setlength{\abovedisplayshortskip}{3pt}
\setlength{\belowdisplayshortskip}{3pt}
\setlength{\jot}{2pt}

\begin{align}
\abs{H(X_{ik})-H(v_1)}
&= \abs{
-\sum_{j=1}^n p_{ijk}\log p_{ijk}
-\adjpar{-\sum_{j=1}^n v_{1j}\log v_{1j}}
} \\
&= \abs{
\sum_{j=1}^n -p_{ijk}\log p_{ijk}
+ v_{1j}\log v_{1j}
}.
\end{align}

Without loss of generality assume \(v_{1j}>0\) for all \(j\leq m\)
and \(v_{1j}=0\) for all \(j>m\). Then,

\begin{equation}
\begin{aligned}
\abs{H(X_{ik})-H(v_1)}
&= \Bigg|
\sum_{j>m}-p_{ijk}\log p_{ijk} \\
&\quad
+\sum_{j=1}^{m}-p_{ijk}\log p_{ijk}
+v_{1j}\log v_{1j}
\Bigg| .
\end{aligned}
\end{equation}

\begin{equation}
\begin{aligned}
&= \Bigg|
\sum_{j>m}-p_{ijk}\log p_{ijk} \\
&\quad
+\sum_{j=1}^{m}
-(p_{ijk}-v_{1j}+v_{1j})\log p_{ijk}
+v_{1j}\log v_{1j}
\Bigg| .
\end{aligned}
\end{equation}

\begin{equation}
\begin{aligned}
&= \Bigg|
\sum_{j>m}-p_{ijk}\log p_{ijk} \\
&\quad
-\sum_{j=1}^{m}
(p_{ijk}-v_{1j}+v_{1j})\log p_{ijk}
-v_{1j}\log v_{1j}
\Bigg| .
\end{aligned}
\end{equation}

\begin{equation}
\begin{aligned}
&= \Bigg|
\sum_{j>m}-p_{ijk}\log p_{ijk} \\
&\quad
-\sum_{j=1}^{m}
(p_{ijk}-v_{1j})\log p_{ijk}
+v_{1j}(\log p_{ijk}-\log v_{1j})
\Bigg| .
\end{aligned}
\end{equation}

\begin{equation}
\begin{aligned}
\abs{H(X_{ik})-H(v_1)}
&\leq
\sum_{j>m}
\abs{-p_{ijk}\log p_{ijk}} \\
&\quad
+\sum_{j=1}^{m}
\abs{-(p_{ijk}-v_{1j})\log p_{ijk}} \\
&\quad
+\sum_{j=1}^{m}
\abs{v_{1j}(\log p_{ijk}-\log v_{1j})} .
\end{aligned}
\end{equation}

\begin{equation}
\begin{aligned}
&\leq
\sum_{j>m}
-c_1c_2^k\log p_{ijk} \\
&\quad
+\sum_{j=1}^{m}
-c_1c_2^k\log p_{ijk} \\
&\quad
+\sum_{j=1}^{m}
\abs{v_{1j}(\log p_{ijk}-\log v_{1j})} .
\end{aligned}
\end{equation}

\begin{equation}
\begin{aligned}
&\leq
\sum_{j>m}
-c_1c_2^k\log p_{ijk} \\
&\quad
+\sum_{j=1}^{m}
-c_1c_2^k\log p_{ijk} \\
&\quad
+\sum_{j=1}^{m}
\abs{v_{1j}(\log p_{ijk}-\log v_{1j})} .
\end{aligned}
\end{equation}

\begin{equation}
\begin{aligned}
&\leq
\sum_{j>m}
c_1c_2^k c_3^k
+\sum_{j=1}^{m}
c_1c_2^k c_3^k \\
&\quad
+\sum_{j=1}^{m}
v_{1j}
\frac{c_1c_2^k}{\min\{v_{1j},p_{ijk}\}} .
\end{aligned}
\end{equation}

\endgroup
where we used the pointwise convergence of the distribution (\(p_{ijk}\to v_{1j}\)) and the fact that \(p_{ijk}\geq \frac{1}{c_3^k}\). The latter holds because \(P\) is an aperiodic stochastic matrix with at most one absorbing state and no cycles among transient states; by Chapter~3 of \cite{kemeny1976finite}, all eigenvalues of \(P\) other than \(\lambda_1=1\) satisfy \(\abs{\lambda_i}<1\), which guarantees exponential convergence of \(p_{ijk}\to v_{1j}\) and thus the stated lower bound for sufficiently large \(k\).
\end{proof}

\section{Dataset Details}
\label{app:dataset}

\subsection{Patient Timeline}
Figure~\ref{fig:patient_timeline} illustrates the longitudinal signals available for a representative patient.

\begin{figure}[!ht]
  \centering
  \includegraphics[width=\columnwidth]{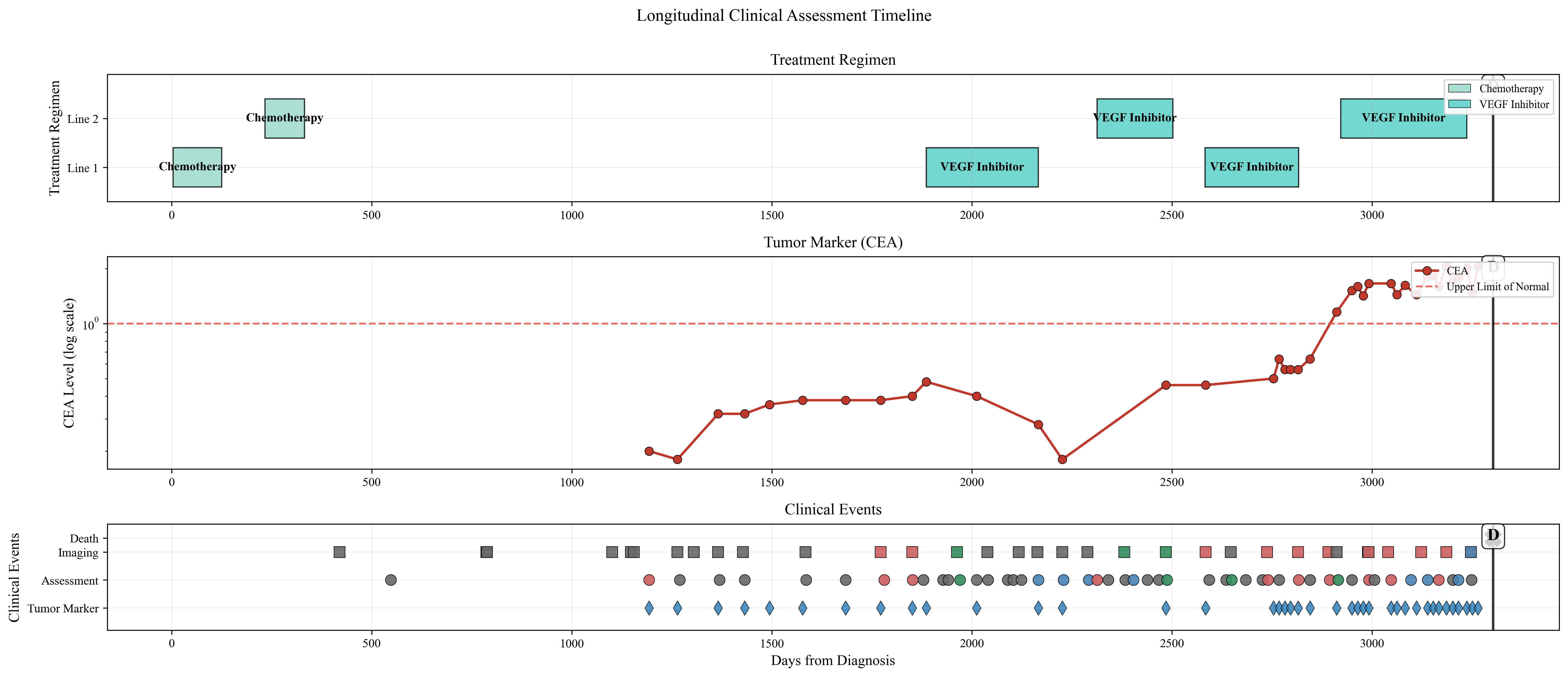}
  \caption{Available longitudinal information for a representative patient in the AACR Project GENIE BPC CRC cohort, including treatment history, biomarker measurements, radiological/imaging assessments, and oncologist evaluations over time (days from diagnosis).}
  \label{fig:patient_timeline}
\end{figure}

\subsection{Dataset Summary and Preprocessing Outputs}
\label{app:dataset_summary}

Table~\ref{tab:genie_crc_processing} summarizes the derived quantities and preprocessing outputs used for state/action modeling.

\noindent \begin{table*}[!ht]
\centering
\small
\caption{GENIE BPC CRC v2.0-public: non-genomic trajectory construction summary and data support for state/action modeling.}
\label{tab:genie_crc_processing}
\begin{tabular}{p{0.16\linewidth} p{0.58\linewidth} p{0.18\linewidth}}
\toprule
\textbf{Component} & \textbf{Operationalization (non-genomic)} & \textbf{Summary} \\
\midrule
State space &
Disease states $\{A,B,C,D\}$ inferred from imaging and oncologist assessments; death events define absorbing $D$. &
See Fig.~\ref{fig:transitions} \\
\addlinespace
State evidence &
Imaging and oncologist assessments provide categorical status: improving/responding, stable/no change, progressing/worsening, not stated/intermediate; oncology assessments are curated approximately monthly. &
--- \\
\addlinespace
Patient stratification &
Age groups: young ($<50$), middle (50--70), elderly ($>70$); Stage groups: early (I--III) vs. advanced (IV). &
Age: 515 / 802 / 168;\newline
Stage: 783 / 701 \\
\addlinespace
Baseline dynamics $P_0$ &
Estimated from treatment-free intervals (exclude observation events during active treatment); death transitions can use all intervals. &
$\sim$90K untreated events;\newline
$\sim$47K transitions \\
\addlinespace
Treatment-conditioned dynamics &
Transitions extracted during treated windows after mapping regimens to action classes. &
See Appendix~\ref{app:transition_counting} \\
\midrule
\textbf{Action class} & \multicolumn{2}{l}{\textbf{Examples / definition}} \\
\midrule
ChemoOnly & \multicolumn{2}{l}{Cytotoxic chemotherapy without targeted/IO agents (e.g., 5-FU/capecitabine, oxaliplatin, irinotecan).} \\
VEGF & \multicolumn{2}{l}{Anti-angiogenic agents targeting VEGF pathway (bevacizumab, aflibercept, ramucirumab).} \\
EGFR & \multicolumn{2}{l}{Anti-EGFR monoclonal antibodies (cetuximab, panitumumab).} \\
IO & \multicolumn{2}{l}{Immune checkpoint inhibitors (pembrolizumab, nivolumab, ipilimumab).} \\
HER2 & \multicolumn{2}{l}{HER2-targeted therapies (trastuzumab, pertuzimab, tucatinib).} \\
BRAF & \multicolumn{2}{l}{BRAF inhibitors (encorafenib, vemurafenib, dabrafenib).} \\
Investigational & \multicolumn{2}{l}{Agents explicitly marked investigational in regimen strings.} \\
\bottomrule
\end{tabular}
\end{table*}

\subsection{State Mapping}
\label{app:state_mapping}
To discretize the patient status, we map raw clinical assessments from imaging reports, medical oncologist assessments (MOA), and vital status registries to the state space $\mathcal{X} = \{A, B, C, D\}$. The specific rules used to convert these heterogeneous inputs into unified state labels are detailed in Table~\ref{tab:state_mapping}.
\begin{table}[!ht]
\centering
\small
\caption{Mapping of clinical assessments to discrete disease states.}
\label{tab:state_mapping}
\begin{tabular}{lll}
\toprule
\textbf{Source} & \textbf{Raw Assessment} & \textbf{State} \\
\midrule
Imaging & Complete/Partial Response & $A$ (Attenuation of tumor) \\
Imaging & Stable Disease & $B$ (Balanced response) \\
Imaging & Progressive Disease & $C$ (Critical condition) \\
\addlinespace
MOA & Improving/Responding & $A$ (Attenuation of tumor) \\
MOA & Stable/No Change & $B$ (Balanced response) \\
MOA & Progressing/Worsening & $C$ (Critical condition) \\
MOA & Not Evaluated/Indeterminate & Unassigned \\
\addlinespace
Vital Status & Deceased & $D$ (Death) \\
\bottomrule
\end{tabular}
\end{table}

\subsection{Transition Counting and Probability Estimation}
\label{app:transition_counting}

We construct a discrete-time trajectory for each patient by ordering clinical assessments by timestamp (days from diagnosis) and assigning a discrete disease state $X_k \in \mathcal{X}$ at each observation index $k=0,1,\dots$. Let $C \in \mathcal{C}$ denote the patient category (e.g., stage $\times$ age group). For each transition step, we record the \emph{treatment class} $t_k \in \mathcal{T}$ that is active at the time of the \emph{starting} observation, and a measurement selector $M_{k+1}\in\{0,1\}$ indicating whether an informative measurement (e.g., imaging) is obtained at the next stage. We use the composite action notation $A_k \triangleq (M_{k+1},t_k)$.

We count transitions only between \emph{validly assigned} states: indices $k$ are included only if both endpoints satisfy $X_k \in \mathcal{X}$ and $X_{k+1}\in\mathcal{X}$ (observations labeled not stated/indeterminate are not used as transition endpoints).

\paragraph{Baseline dynamics $P_{0\mid C}$.}
For each category $c \in \mathcal{C}$, we estimate the baseline transition matrix $P_{0\mid c}$ by counting transitions between \emph{consecutive observations} $(X_k \rightarrow X_{k+1})$ whose starting observation occurs in a treatment-free interval (i.e., no active regimen at the time of the starting observation). Let $\mathcal{K}^{0}_c$ denote the set of such indices. Define empirical counts
\begin{equation}
\begin{aligned}
{P}_{0\mid c}(i,j)
&= \Pr\!\left(
X_{k+1}=j
\,\middle|\,
X_k=i,\, C=c,\, t_k=\varnothing
\right) \\
&= \frac{N^{0}_{ij\mid c}}
{\sum_{j'\in\mathcal{X}} N^{0}_{ij'\mid c}} .
\end{aligned}
\label{eq:P0_est}
\end{equation}

\paragraph{Category-conditioned treatment matrices $T_{t\mid c}$.}
To obtain empirical treatment transition matrices consistent with \eqref{equation: transition matrix update}, we estimate one transition matrix for each treatment class $t \in \mathcal{T}$ \emph{within each category} $c\in\mathcal{C}$ by counting consecutive-observation transitions whose starting observation occurs while treatment class $t_k=t$ is active. Let $\mathcal{K}^{t}_c$ denote the set of indices satisfying $C=c$ and $t_k=t$ at the starting observation. Define
\begin{equation}
\begin{aligned}
{T}_{t\mid c}(i,j)
&= \Pr\!\left(
X_{k+1}=j
\,\middle|\,
X_k=i,\, C=c,\, t_k=t
\right) \\
&= \frac{N^{t}_{ij\mid c}}
{\sum_{j'\in\mathcal{X}} N^{t}_{ij'\mid c}} .
\end{aligned}
\label{eq:T_action_est}
\end{equation}
When simulating \eqref{equation: transition matrix update}, we instantiate the matrix $T_k$ by selecting the category-conditioned matrix corresponding to the chosen treatment class: if the realized category is $C=c$ and the selected class at stage $k$ is $t_k=t$, we set $T_k \equiv T_{t\mid c}$.

\end{document}